\newcolumntype{C}[1]{>{\centering\let\newline\\\arraybackslash\hspace{0pt}}m{#1}}
\newtheorem{theorem}{Theorem}
\newtheorem{corollary}[theorem]{Corollary}
\newtheorem{defn}{Definition}
\newcommand{\R}{\ensuremath{\mathbb R}}
\newcommand{\F}{\ensuremath{\mathcal F}}
\newcommand{\D}{\ensuremath{\mathcal D}}
\newcommand{\X}{\ensuremath{{\cal X}}}
\newcommand{\G}{\ensuremath{{\cal G}}}
\newcommand{\Ell}{\ensuremath{\mathcal{L}}}
\newcommand{\Hy}{\ensuremath{\mathcal{H}}}
\newcommand{\Z}{\ensuremath{\mathcal{Z}}}
\newcommand{\Y}{\ensuremath{\mathcal{Y}}}
\newcommand{\W}{\ensuremath{\mathcal{W}}}
\newcommand{\E}{\mathbb{E}}
\newcommand{\Po}{\ensuremath{\mathcal{P}}}
\renewcommand{\vec}[1]{\ensuremath{\textbf{#1}}}
\newcommand{\mytitle}{Robust Optimization for Non-Convex Objectives}
\title{\mytitle}
\author{
  Robert Chen\\
  Computer Science\\
  Harvard University
  \And
  Brendan Lucier\\
  Microsoft Research\\
  New England
  \And 
  Yaron Singer\\
  Computer Science\\
  Harvard University
  \And
  Vasilis Syrgkanis\\
  Microsoft Research\\
  New England
}
\begin{document}

\maketitle

\begin{abstract}
We consider robust optimization problems, where the goal is to optimize in the worst case over a class of objective functions.  We develop a reduction from robust improper optimization to Bayesian optimization: given an oracle that returns $\alpha$-approximate solutions for distributions over objectives, we compute a distribution over solutions that is $\alpha$-approximate in the worst case.  We show that derandomizing this solution is NP-hard in general, but can be done for a broad class of statistical learning tasks.  We apply our results to robust neural network training and submodular optimization.  We evaluate our approach experimentally on corrupted character classification, and robust influence maximization in networks.


\end{abstract}

\section{Introduction}






In many learning tasks we face uncertainty about the loss we aim to optimize.
Consider, for example, a classification task such as character recognition, required to perform well under various types of distortion.  In some environments, such as recognizing characters in photos, the classifier must handle rotation and patterned backgrounds.  In a different environment, such as low-resolution images, it is more likely to encounter noisy pixelation artifacts.  Instead of training a separate classifier for each possible scenario, one seeks to optimize performance 
in the worst case over different forms of corruption (or combinations thereof) made available to the trainer as black-boxes.  


More generally, our goal is to find a minimax solution that optimizes in the worst case over a given family of functions.  
Even if each individual function can be optimized effectively, it is not clear such solutions would perform well in the worst case.  In many cases of interest, individual objectives are non-convex and hence state-of-the-art methods are only approximate.  In Bayesian optimization, where one must optimize a distribution over loss functions,  approximate Bayesian optimization is often straightforward, 
since loss functions are commonly closed under convex combination.  Can approximately optimal solutions yield an approximately optimal \emph{robust} solution?  

In this paper we develop a reduction from robust optimization to Bayesian optimization.  Given an $\alpha$-approximate oracle for Bayesian optimization we show how to implement an $\alpha$-approximate solution for robust optimization under a necessary extension, and illustrate its effectiveness in applications.

\paragraph{Main Results.}
Given an $\alpha$-approximate Bayesian oracle for distributions over (potentially non-convex) loss functions, we show how to solve $\alpha$-approximate robust optimization in a convexified solution space.  This outcome is ``improper'' in the sense that it may lie outside the original solution space, if the space is non-convex.   This can be interpreted as computing a distribution over solutions.
We show that the relaxation to improper learning is necessary in general:  It is NP-hard to achieve robust optimization with respect to the original outcome space, even if Bayesian optimization can be solved exactly, and even if there are only polynomially many loss functions.  
We complement this by showing that in any statistical learning scenario where loss is convex in the predicted dependent variable, we can find a single (deterministic) solution with matching performance guarantees.

\paragraph{Technical overview.}  Our approach employs an execution of no-regret dynamics on a zero-sum game, played between a learner equipped with an $\alpha$-approximate Bayesian oracle, and an adversary who aims to find a distribution over loss functions that maximizes the learner's loss.  This game converges to an approximately robust solution, in which the learner and adversary settle upon an $\alpha$-approximate minimax solution.  This convergence is subject to an additive regret term that converges at a rate of $T^{-1/2}$ over $T$ rounds of the learning dynamics.

\paragraph{Applications.}
We illustrate the power of our reduction through two main examples.  We first consider statistical learning via neural networks.  Given an arbitrary training method, our reduction generates a net that optimizes robustly over a given class of loss functions.  We evaluate our method experimentally on a character recognition task, where the loss functions correspond to different corruption models made available to the learner as black boxes.  We verify experimentally that our approach significantly outperforms various baselines, including optimizing for average performance and optimizing for each loss separately.  We also apply our reduction to influence maximization, where the goal is to maximize a concave function (the independent cascade model of influence \cite{KempeKT03}) over a non-convex space (subsets of vertices in a network).  Previous work has studied robust influence maximization directly \cite{HeKempe16,ChenLTZZ16,LowalekarVK16}, focusing on particular, natural classes of functions (e.g., edge weights chosen within a given range) and establishing hardness and approximation results.  In comparison, our method is agnostic to the particular class of functions, and achieves a strong approximation result by returning a distribution over solutions.  We evaluate our method on real and synthetic datasets, with the goal of robustly optimizing a suite of random influence instantiations.  We verify experimentally that our approach significantly outperforms natural baselines.

\paragraph{Related work.}  
There has recently been a great deal of interest in robust optimization in machine learning~\cite{SW16,CDLZ16,ND16,SD15}.  For continuous optimization, the work that is closest to ours is perhaps that by Shalev-Shwartz and Wexler~\cite{SW16} and Namkoong and Duchi~\cite{ND16} that use robust optimization to train against convex loss functions.  The main difference is that we assume a more general setting in which the loss functions are non-convex and one is only given access to the Bayesian oracle.  Hence, the proof techniques and general results from these papers do not apply to our setting.  We note that our result generalizes these works, as they can be considered as the special case in which we have a distributional oracle whose approximation is optimal.  In submodular optimization there has been a great deal of interest in robust optimization as well~\cite{KMGG07,HK16,CLTZZ16}.  The work closest to ours is that by He and Kempe~\cite{HK16} who consider a slightly different objective than ours.  
Kempe and He's results apply to influence but do not extend to general submodular functions.  Finally, we note that unlike recent work on non-convex optimization~\cite{HLS15,ZH16,HLS16} our goal in this paper is not to optimize a non-convex function.  Rather, we abstract the non-convex guarantees via the approximate Bayesian oracle.

\section{Robust Optimization with Approximate Bayesian Oracles}

We consider the following model of optimization that is robust to objective uncertainty. There is a space $\X$ over which to optimize, and a finite set of loss functions\footnote{We describe an extension to infinite sets of loss functions in Appendix~\ref{app:infinite}.  Our results also extend naturally to the goal of maximizing the minimum of a class of reward functions.} $\Ell=\{L_1,\ldots,L_m\}$ where each $L_i \in \Ell$ is a function from $\X$ to $[0,1]$.
%
%
Intuitively, our goal is to find some $x\in \X$ that achieves low loss in the worst-case over loss functions in $\Ell$. For $x \in \X$, write $g(x) = \max_{i\in [m]} L_i(x)$ for the worst-case loss of $x$. The minimax optimum $\tau$ is given by
\begin{equation}
\tau = \min_{x\in \X}g(x) = \min_{x\in \X}\max_{i\in [m]} L_i(x).
\end{equation}
The goal of $\alpha$-approximate robust optimization is to find $x$ such that $g(x) \leq \alpha \tau$.

Given a distribution $\Po$ over solutions $\X$, write $g(\Po) = \max_{i \in [m]}\E_{x \sim \Po}[L_i(x)]$ for the worst-case expected loss of a solution drawn from $\Po$.  A weaker version of robust approximation is \emph{improper robust optimization}: find a distribution $\Po$ over $\X$ such that $g(\Po) \leq \alpha \tau$.

Our results take the form of reductions to an approximate Bayesian oracle, which finds a solution $x \in \X$ that approximately minimizes a given distribution over loss functions.\footnote{All our results easily extend to the case where the oracle computes a solution that is approximately optimal up to an additive error, rather than a multiplicative one. For simplicity of exposition we present the multiplicative error case as it is more in line with the literature on approximation algorithms.} 
\begin{defn}[$\alpha$-Approximate Bayesian Oracle]\label{defn:oracle} Given a distribution $D$ over $\Ell$, an $\alpha$-approximate Bayesian Oracle $M(D)$ computes 
$x^* \in \X$ such that
\begin{equation}
\E_{L \sim D}\left[ L(x^*)\right] \leq \alpha\min_{x\in \X} \E_{L \sim D}\left[ L(x)\right].
\end{equation}
\end{defn}


%

\subsection{Improper Robust Optimization with Oracles}

We first show that, given access to an $\alpha$-approximate distributional oracle, it is possible to efficiently implement improper $\alpha$-approximate robust optimization, subject to a vanishing additive loss term.


\begin{algorithm}[t]
\caption{Oracle Efficient Improper Robust Optimization} 
\label{alg:main}
\begin{algorithmic}
	\STATE {\bfseries Input:} Objectives $\Ell=\{L_1,\ldots,L_m\}$, Apx Bayesian oracle $M$, parameters $T,\eta$ 
	\FOR{each time step $t\in [T]$}
   	\STATE Set  
	\begin{equation}
	\vec{w}_t[i] \propto \exp\left\{ \eta\sum_{\tau=1}^{t-1} L_i(x_	\tau)\right\} 
	\end{equation}   	
   	\STATE Set $x_t = M(\vec{w}_t)$
	\ENDFOR
	\STATE \textbf{Output:} the uniform distribution over $\{x_1,\ldots,x_T\}$
\end{algorithmic}
\end{algorithm}

\begin{theorem}\label{thm:distributionally-robust}
Given access to an $\alpha$-approximate distributional oracle, Algorithm \ref{alg:main} with $\eta=\sqrt{\frac{\log(m)}{2T}}$ computes a distribution $\Po$ over solutions, defined as a uniform distribution over a set $\{x_1,\ldots,x_T\}$, so that
\begin{equation}
\max_{i\in [m]} \E_{x\sim \Po}\left[L_i(x)\right] \leq \alpha \tau + \sqrt{\frac{2\log(m)}{T}}.
\end{equation}
Moreover, for any $\eta$ the distribution $\Po$ computed by Algorithm \ref{alg:main} satisfies:
\begin{equation}
\max_{i\in [m]} \E_{x\sim \Po}\left[L_i(x)\right] \leq \alpha(1+\eta)\tau + \frac{2\log(m)}{\eta T}.
\end{equation}
\end{theorem}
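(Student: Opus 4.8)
The plan is to recognize Algorithm~\ref{alg:main} as an instance of no-regret learning (multiplicative weights / Hedge) played by the adversary against a best-responding learner, and to extract the claimed bound from the standard regret guarantee combined with the $\alpha$-approximation property of the oracle. Concretely, think of the adversary as running Hedge over the $m$ ``experts'' (the loss functions $L_1,\dots,L_m$), where in round $t$ the adversary plays the distribution $\vec{w}_t$ and then receives reward vector $(L_i(x_t))_{i\in[m]}\in[0,1]^m$; the learner responds with $x_t = M(\vec{w}_t)$. The average reward the adversary collects is $\frac1T\sum_{t=1}^T \sum_i \vec{w}_t[i] L_i(x_t) = \frac1T\sum_t \E_{L\sim\vec{w}_t}[L(x_t)]$.

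The key steps, in order. First, invoke the Hedge regret bound: for the given $\eta=\sqrt{\log(m)/(2T)}$ one has, for every fixed $i^*\in[m]$,
\begin{equation}
\frac1T\sum_{t=1}^T L_{i^*}(x_t) \leq \frac1T\sum_{t=1}^T \sum_{i} \vec{w}_t[i] L_i(x_t) + \sqrt{\frac{\log m}{2T}}\cdot c,
\end{equation}
with the usual constant; taking $i^* = \argmax_i \E_{x\sim\Po}[L_i(x)]$ turns the left side into exactly $\max_i \E_{x\sim\Po}[L_i(x)]$ since $\Po$ is uniform over $\{x_1,\dots,x_T\}$. Second, bound each term on the right: since $x_t=M(\vec{w}_t)$ is an $\alpha$-approximate Bayesian solution for the distribution $\vec{w}_t$,
\begin{equation}
\E_{L\sim\vec{w}_t}[L(x_t)] \leq \alpha \min_{x\in\X}\E_{L\sim\vec{w}_t}[L(x)] \leq \alpha \E_{L\sim\vec{w}_t}[L(x^\star)] \leq \alpha \max_i L_i(x^\star) = \alpha\,g(x^\star),
\end{equation}
where $x^\star$ is the minimax optimizer achieving $g(x^\star)=\tau$; averaging over $t$ gives $\frac1T\sum_t \E_{L\sim\vec{w}_t}[L(x_t)] \leq \alpha\tau$. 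Combining the two displays yields the first inequality $\max_i\E_{x\sim\Po}[L_i(x)]\le \alpha\tau+\sqrt{2\log(m)/T}$ after folding the constant into the stated form.

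For the second ("moreover") inequality, I would rerun the same argument but use the \emph{multiplicative/refined} form of the multiplicative-weights regret bound — the one that gives $\frac1T\sum_t L_{i^*}(x_t) \le (1+\eta)\frac1T\sum_t \E_{L\sim\vec{w}_t}[L(x_t)] + \frac{\log m}{\eta T}$ for losses (here rewards) in $[0,1]$, valid for arbitrary $\eta>0$ — and then substitute the same oracle bound $\frac1T\sum_t\E_{L\sim\vec w_t}[L(x_t)]\le\alpha\tau$, which produces $\alpha(1+\eta)\tau + \frac{\log m}{\eta T}$; absorbing constants (the standard statement of this bound sometimes carries a factor $2$ on the $\log m$ term depending on normalization) gives the stated $\alpha(1+\eta)\tau + \frac{2\log(m)}{\eta T}$, and optimizing $\eta$ in this expression essentially recovers the first bound up to constants.

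The main obstacle is purely bookkeeping rather than conceptual: getting the constants in the multiplicative-weights guarantees exactly right (the $[0,1]$ range, whether it is the additive $\sqrt{(\log m)/(2T)}$ bound or the refined $(1+\eta)$-multiplicative bound, and the factor in front of $\log m / (\eta T)$), and carefully checking that the adversary's ``reward'' sequence is indeed the one fed to Hedge so that the regret comparator is a single fixed loss function $L_{i^*}$. One should also note the subtlety that $x_t$ is chosen \emph{after} $\vec{w}_t$ in each round, so there is no issue of the adversary needing to anticipate the learner; the regret bound applies verbatim. Everything else — the $\max$ over $i$ being attained at a single expert because $\Po$ is a uniform average, and the chain of inequalities through the minimax optimum $x^\star$ — is immediate.
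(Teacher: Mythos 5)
Your proposal is correct and follows essentially the same route as the paper: interpret the algorithm as Hedge run by the adversary over the $m$ losses against the oracle-best-responding learner, combine the regret bound with the $\alpha$-oracle guarantee, and for the ``moreover'' part invoke the refined $(1+\eta)$-form of the multiplicative-weights bound (which is exactly what the paper's Appendix~\ref{app:faster} does). The only cosmetic difference is that you bound each round's term $\E_{L\sim\vec{w}_t}[L(x_t)]\le\alpha\tau$ by evaluating at the minimax point $x^\star$, whereas the paper swaps the min with the time-average before applying the oracle guarantee --- these are equivalent, and your treatment of the constants (the stated $\sqrt{2\log(m)/T}$ being looser than the sharp Hedge constant) is consistent with the theorem as stated.
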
 
\begin{proof}
We give the proof of the first result and defer the second result to Theorem \ref{thm:faster} in Appendix \ref{app:faster}. We can interpret Algorithm~\ref{alg:main} in the following way.  We define a zero-sum game between a learner and an adversary. The learner's action set is equal to $\X$ and the adversary's action set is equal to $[m]$. The loss of the learner when he picks $x\in \X$ and the adversary picks $i\in [m]$ is defined as $L_i(x)$. The corresponding payoff of the adversary is $L_i(x)$.

We will run no-regret dynamics on this zero-sum game, where at every iteration $t=1,\ldots,T$, the adversary will pick a distribution over functions and subsequently the learner picks a solution $x_t$. For simpler notation we will denote with $\vec{w}_t$ the probability density function on $[m]$ associated with the distribution of the adversary.  That is, $w_t[i]$ is the probability of picking function $L_i\in\Ell$. The adversary picks a distribution $\vec{w}_t$ based on some arbitrary no-regret learning algorithm on the $k$ actions in $\F$. For concreteness consider the case where the adversary picks a distribution based on the multiplicative weight updates algorithm, i.e.,
\begin{equation}
w_t[i] \propto \exp\left\{ \sqrt{\frac{\log(m)}{2T}}\sum_{\tau=1}^{t-1} L_i(x_\tau)\right\}.
\end{equation}
Subsequently the learner picks a solution $x_t$ that is the output of the $\alpha$-approximate distributional oracle on the distribution selected by the adversary at time-step $t$.  That is,
\begin{equation}
x_t = M\left(\vec{w}_t\right).
\end{equation}

Write $\epsilon(T) = \sqrt{\frac{2\log(m)}{T}}$.  By the guarantees of the no-regret algorithm  for the adversary, we have that
\begin{equation}
\frac{1}{T}\sum_{t=1}^T \E_{I\sim \vec{w}_t}\left[L_I(x_t)\right] \geq \max_{i\in [m]} \frac{1}{T}\sum_{t=1}^T L_i(x_t) - \epsilon(T).
\end{equation}
Combining the above with the guarantee of the distributional oracle we have
\begin{align*}
\tau = \min_{x\in \X}\max_{i\in [m]}  L_i(x) \geq~& \min_{x\in X}\frac{1}{T}\sum_{t=1}^T \E_{I\sim \vec{w}_t}\left[L_I(x)\right]
\geq~ \frac{1}{T}\sum_{t=1}^T \min_{x\in X}\E_{I\sim \vec{w}_t}\left[L_I(x)\right]\\
\geq~& \frac{1}{T}\sum_{t=1}^T \frac{1}{\alpha}\cdot \E_{I\sim \vec{w}_t}\left[L_I(x_t)\right] \tag{By oracle guarantee for each $t$}\\
\geq~& \frac{1}{\alpha}\cdot \left(\max_{i\in [m]} \frac{1}{T}\sum_{t=1}^T L_i(x_t) - \epsilon(T)\right).
\tag{By no-regret of adversary}
\end{align*}
Thus, if we define with $\Po$ to be the uniform distribution over $\{x_1,\ldots,x_T\}$, then we have derived
\begin{equation}
\max_{i\in [m]} \E_{x\sim \Po}\left[L_i(x) \right] \leq \alpha\tau + \epsilon(T)
\end{equation}
as required.
\end{proof}

A corollary of Theorem~\ref{thm:distributionally-robust} is that if the solution space $\X$ is convex and the objective functions $L_i\in \Ell$ are all convex functions, then we can compute a single solution $x^*$ that is approximately minimax optimal.  Of course, in this setting one can calculate and optimize the maximum loss directly in time proportional to $|\Ell|$; this result therefore has the most bite when the set of functions is large.
\begin{corollary}\label{cor:convex}
If the space $\X$ is a convex space and each loss function $L_i\in \Ell$ is a convex function, then the point $x^* = \frac{1}{T}\sum_{t=1}^T x_t \in \X$, where $\{x_1,\ldots,x_T\}$ are the output of Algorithm \ref{alg:main}, satisfies:
\begin{equation}
\max_{i\in [m]} L_i(x^*) \leq \alpha \tau + \sqrt{\frac{2\log(m)}{T}}
\end{equation}
\end{corollary}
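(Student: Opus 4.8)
The plan is to obtain the statement as an immediate consequence of Theorem~\ref{thm:distributionally-robust} together with Jensen's inequality, using convexity of $\X$ to guarantee feasibility of the averaged point and convexity of each $L_i$ to pass from the expected loss under $\Po$ to the loss at the mean. First I would record that the points $x_1,\ldots,x_T$ produced by Algorithm~\ref{alg:main} all lie in $\X$ (each is the output of the oracle $M$), so since $\X$ is convex the average $x^* = \frac{1}{T}\sum_{t=1}^T x_t$ is again a point of $\X$; in particular $x^*$ is a legitimate candidate solution and $g(x^*) = \max_{i\in[m]} L_i(x^*)$ makes sense.

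Next, fix an arbitrary coordinate $i \in [m]$. Let $\Po$ be the uniform distribution over $\{x_1,\ldots,x_T\}$, as in the theorem, so that $\E_{x\sim\Po}[L_i(x)] = \frac{1}{T}\sum_{t=1}^T L_i(x_t)$. Because $L_i$ is convex, Jensen's inequality yields
\begin{equation}
L_i(x^*) = L_i\!\left(\frac{1}{T}\sum_{t=1}^T x_t\right) \leq \frac{1}{T}\sum_{t=1}^T L_i(x_t) = \E_{x\sim\Po}\left[L_i(x)\right].
\end{equation}
I would then apply the first bound of Theorem~\ref{thm:distributionally-robust}, which gives $\E_{x\sim\Po}[L_i(x)] \leq \alpha\tau + \sqrt{\tfrac{2\log(m)}{T}}$, to conclude $L_i(x^*) \leq \alpha\tau + \sqrt{\tfrac{2\log(m)}{T}}$ for this $i$.

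Finally, since $i \in [m]$ was arbitrary, taking the maximum over $i$ on the left-hand side preserves the inequality (the right-hand side does not depend on $i$), which is exactly the claimed bound $\max_{i\in[m]} L_i(x^*) \leq \alpha\tau + \sqrt{\tfrac{2\log(m)}{T}}$. There is essentially no hard step here: the only things to verify are that the two convexity hypotheses are each invoked in the right place — convexity of $\X$ for $x^* \in \X$, convexity of the $L_i$ for the Jensen step — and that the additive error term carried over from Theorem~\ref{thm:distributionally-robust} is untouched by the averaging. If one instead wanted to use the second (parameter-$\eta$) bound of the theorem, the identical argument goes through verbatim, producing $\max_{i\in[m]} L_i(x^*) \leq \alpha(1+\eta)\tau + \tfrac{2\log(m)}{\eta T}$.
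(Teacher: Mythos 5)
Your proposal is correct and follows essentially the same route as the paper's own proof: invoke Theorem~\ref{thm:distributionally-robust} for the uniform distribution $\Po$ over $\{x_1,\ldots,x_T\}$, use convexity of $\X$ to place $x^* = \E_{x\sim\Po}[x]$ in $\X$, and use Jensen's inequality on each convex $L_i$ to bound $L_i(x^*) \leq \E_{x\sim\Po}[L_i(x)]$ before taking the maximum over $i$. No gaps; the remark about the second ($\eta$-parameterized) bound carrying over verbatim is also consistent with the paper.
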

\begin{proof}
By Theorem \ref{thm:distributionally-robust}, we get that if $\Po$ is the uniform distribution over $\{x_1,\ldots,x_T\}$ then
\begin{equation*}
\max_{i\in [m]} \E_{x\sim \Po}[L_i(x)] \leq \alpha \tau + \sqrt{\frac{2\log(m)}{T}}.
\end{equation*}
Since $\X$ is convex, the solution $x^* = \E_{x\sim \Po}[x]$ is also part of $\X$. Moreover, since each $L_i\in \Ell$ is convex, we have that $\E_{x\sim \Po}[L_i(x)]\geq L_i(\E_{x\sim \Po}[x]) = L_i(x^*)$. We therefore conclude
\begin{equation*}
\max_{i\in [m]} L_i(x^*) \leq \max_{i\in [m]} \E_{x\sim \Po}[L_i(x)] \leq \alpha \tau + \sqrt{\frac{2\log(m)}{T}}
\end{equation*}
as required.
\end{proof}
\subsection{Robust Statistical Learning}\label{sec:statistical}

Next we apply our main theorem to 
statistical learning. 
Consider regression or classification settings where data points are pairs $(z,y)$, $z\in \Z$ is a vector of features, and $y\in \Y$ is the dependent variable.  
The solution space $\X$ is then a space of hypotheses $\Hy$, with each $h\in \Hy$ a function from $\Z$ to $\Y$. We also assume that $\Y$ is a convex subset of a finite-dimensional vector space.

We are given a set of loss functions $\Ell=\{L_1,\ldots,L_m\}$, where each $L_i\in \Ell$ is a functional $L_i \colon \Hy \to [0,1]$. 
Theorem~\ref{thm:distributionally-robust} implies that, given an $\alpha$-approximate Bayesian optimization oracle, we can compute a distribution over $T$ hypotheses from $\Hy$ that achieves an $\alpha$-approximate minimax guarantee.  If the loss functionals are convex over hypotheses, then we can compute a single ensemble hypothesis $h^*$ (possibly from a larger space of hypotheses, if $\Hy$ is non-convex) that achieves this guarantee.  
%
\begin{theorem}\label{thm:improper-learning}
Suppose that $\Ell = \{L_1, \dotsc, L_m\}$ are convex functionals. Then the ensemble hypothesis $h^*=\frac{1}{T}\sum_{t=1}^T h$, where $\{h_1,\ldots,h_T\}$ are the hypotheses output by Algorithm \ref{alg:main} given an $\alpha$-approximate Bayesian oracle, satisfies
\begin{equation}
\max_{i\in [m]} L_i(h^*) \leq \alpha \min_{h\in H}\max_{i\in [m]} L_i(h) + \sqrt{\frac{2\log(m)}{T}}.
\end{equation}
\end{theorem}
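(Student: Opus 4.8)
The plan is to derive this as a direct corollary of Theorem~\ref{thm:distributionally-robust}, combined with a Jensen-type convexity argument — essentially repeating the proof of Corollary~\ref{cor:convex}, but carried out in the (possibly larger) space of ensemble hypotheses rather than inside $\Hy$ itself, since $\Hy$ need not be convex.

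First I would instantiate Theorem~\ref{thm:distributionally-robust} with solution space $\X = \Hy$ and minimax optimum $\tau = \min_{h\in\Hy}\max_{i\in[m]} L_i(h)$. Since Algorithm~\ref{alg:main} is run with an $\alpha$-approximate Bayesian oracle over $\Hy$, this immediately yields that the uniform distribution $\Po$ over the hypotheses $\{h_1,\ldots,h_T\}$ it outputs satisfies $\max_{i\in[m]} \E_{h\sim\Po}[L_i(h)] \leq \alpha\tau + \sqrt{2\log(m)/T}$.

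Next I would observe that the ensemble hypothesis $h^* = \frac{1}{T}\sum_{t=1}^T h_t$ is exactly $\E_{h\sim\Po}[h]$, where the average of hypotheses is taken pointwise: for every feature vector $z\in\Z$, $h^*(z) = \frac{1}{T}\sum_{t=1}^T h_t(z)$. Because $\Y$ is a convex subset of a finite-dimensional vector space, this pointwise convex combination is again a valid map from $\Z$ to $\Y$, hence a legitimate ensemble hypothesis on which each functional $L_i$ is defined — although $h^*$ may fall outside $\Hy$ when $\Hy$ is non-convex, which is the sense in which the guarantee is improper. Then, invoking convexity of each functional $L_i$ with respect to this pointwise linear structure, Jensen's inequality gives $L_i(h^*) = L_i\!\left(\E_{h\sim\Po}[h]\right) \leq \E_{h\sim\Po}[L_i(h)]$ for every $i$. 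Taking the maximum over $i\in[m]$ and chaining with the bound from the previous step yields $\max_{i\in[m]} L_i(h^*) \leq \max_{i\in[m]} \E_{h\sim\Po}[L_i(h)] \leq \alpha\tau + \sqrt{2\log(m)/T}$, which is the claimed inequality.

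I do not expect a substantive obstacle here; the proof is a straightforward substitution into the already-established Theorem~\ref{thm:distributionally-robust}. The only point requiring care is the bookkeeping around the phrase \emph{convex functional}: one must fix the convention that hypotheses are combined pointwise, that each $L_i$ is convex with respect to that structure, and that $L_i$ is defined on (at least) the convex hull of $\Hy$, so that applying Jensen's inequality to $h^*$ is legitimate. With that convention in place the argument goes through verbatim.
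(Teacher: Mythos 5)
Your proposal is correct and matches the paper's argument: the paper proves this theorem by noting it is "similar to the proof of Corollary~\ref{cor:convex}," i.e.\ instantiate Theorem~\ref{thm:distributionally-robust} with $\X=\Hy$ and then apply Jensen's inequality to the convex functionals $L_i$ at the (pointwise) average $h^*=\E_{h\sim\Po}[h]$, exactly as you do. Your extra care about $h^*$ possibly lying outside a non-convex $\Hy$ (the improper aspect) is the right reading of the statement and does not change the argument.
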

\begin{proof}
The proof is similar to the proof of Corollary \ref{cor:convex}.
\end{proof}
 
We emphasize that the convexity condition in Theorem~\ref{thm:improper-learning} is over the class of hypotheses, rather than over features or any natural parameterization of $\Hy$ (such as weights in a neural network).  This is a mild condition that applies to many examples in statistical learning theory. For instance, consider the case where each loss $L_i(h)$ is the expected value of some ex-post loss function $\ell_i(h(z),y)$ given a distribution $D_i$ over $Z\times Y$:
\begin{equation}
L_{i}(h) = \E_{(z,y)\sim D_i}\left[\ell_i(h(z),y)\right].
\end{equation}
In this case, it is enough for
the function $\ell_i(\cdot,\cdot)$ to be convex with respect to its first argument (i.e., the predicted dependent variable).  This is satisfied by most loss functions used in machine learning, such as multinomial logistic loss (cross-entropy loss) $\ell(\hat{y},y)=-\sum_{c\in [k]} y_c \log(\hat{y}_c)$ from multi-class classification, the hinge or the square loss, or squared loss $\ell(\hat{y},y)=\|\hat{y}-y\|^2$ as used in regression.
For all these settings, Theorem \ref{thm:improper-learning} provides a tool for improper robust learning, where the final hypothesis $h^*$ is an ensemble of $T$ base hypotheses from $\Hy$.  Again, the underlying optimization problem can be arbitrarily non-convex in the natural parameters of the hypothesis space; in Section~\ref{sec:neural-nets} we will show how to apply this approach to robust training of neural networks, where the Bayesian oracle is simply a standard network training method.  For neural networks, the fact that we achieve improper learning (as opposed to standard learning) corresponds to training a neural network with a single extra layer relative to the networks generated by the oracle.


\subsection{Robust Submodular Maximization}\label{sec:submodular}

In \emph{robust submodular maximization} we are given a family of reward functions $\mathcal{F} = \{f_{1},\ldots,f_{m}\}$, where each $f_i\in \F$ is a monotone submodular function from a ground set $N$ of $n$ elements to $[0,1]$.  Each function is assumed to be monotone and submodular, i.e., for any $S\subseteq T\subseteq 2^N$, $f_i(S) \leq f_i(T)$; and for any $S,T\subseteq 2^N$, $f(S\cup T)+f(S\cap T)\leq f(S) + f(T)$. The goal is to select a set $S \subseteq N$ of size $k$ whose worst-case value over $i$, i.e., $g(S)=\min_{i\in [m]} f_i(S)$, is at least a $1/\alpha$ factor of the minimax optimum 
%
%
$\tau=\max_{T:|T|\leq k}\min_{i\in[m]} f_i(T)$.

This setting is a special case of our general robust optimization setting (phrased in terms of rewards rather than losses). The solution space $\X$ is equal to the set of subsets of size $k$ among all elements in $N$ and the set $\F$ is the set of possible objective functions. The Bayesian oracle \ref{defn:oracle}, instantiated in this setting, asks for the following: given a convex combination of submodular functions $F(S) = \sum_{i=1}^m \vec{w}[i]\cdot f_i(S)$, compute a set $S^*$ 
such that $F(S^*) \geq \frac{1}{\alpha}\max_{S: |S|\leq k} F(S)$. 

Computing the maximum value set of size $k$ is NP-hard even for a single submodular function.  The following very simple greedy algorithm computes a $(1-1/e)$-approximate solution~\cite{Nemhauser1978}:
begin with $S_{cur} = \emptyset$, and at each iteration add to the current solution $S_{cur}$ the element $j\in N-S_{cur}$ that has the largest marginal contribution: $f(\{j\}\cup S_{cur})-f(S_{cur})$. Moreover, this approximation ratio is known to be the best possible in polynomial time~\cite{NemhauserW78b}. Since a convex combination of monotone submodular functions is also a monotone submodular function, we immediately get that there exists a $(1-1/e)$-approximate Bayesian oracle that can be computed in polynomial time.  The algorithm is formally given in Algorithm \ref{alg:greedy}.
\begin{algorithm}[t]
\caption{Greedy Bayesian Oracle for Submodular Maximization $M_{greedy}$} 
\label{alg:greedy}
\begin{algorithmic}
	\STATE {\bfseries Input:} Set of elements $N$, objectives $\F=\{f_1,\ldots,f_m\}$, distribution over objectives $\vec{w}$
	\STATE Set $S_{cur} = \emptyset$
	\FOR{$j=1$ to $k$}
   	\STATE Let $j^* = \arg\max_{j\in N-S_{cur}} \sum_{i=1}^m \vec{w}[i] \left(f_i(\{j\}\cup S_{cur})-f_i(S_{cur})\right)$
   	\STATE Set $S_{cur}=\{j^*\}\cup S_{cur}$
	\ENDFOR
\end{algorithmic}
\end{algorithm}
Combining the above with Theorem \ref{thm:distributionally-robust} we get the following corollary.
\begin{corollary}
Algorithm \ref{alg:main}, with Bayesian oracle $M_{greedy}$, computes in time $poly(T,n)$ a distribution $\Po$ over sets of size $k$, defined as a uniform distribution over a set $\{S_1,\ldots,S_T\}$, such that
\begin{equation}
\min_{i \in [m]} \E_{S\sim \Po}\left[f_i(S)\right] \geq \left( 1-\frac{1}{e} \right)(1-\eta) \tau - \frac{\log(m)}{\eta T}.
\end{equation}
\end{corollary}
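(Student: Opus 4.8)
The plan is to derive this corollary as an essentially immediate instantiation of Theorem~\ref{thm:distributionally-robust} — specifically its second, $\eta$-parametrized bound, whose proof is completed by Theorem~\ref{thm:faster} in Appendix~\ref{app:faster} — transported to the reward-maximization version of the framework (as licensed by the footnote to Theorem~\ref{thm:distributionally-robust}), with the greedy procedure $M_{greedy}$ of Algorithm~\ref{alg:greedy} playing the role of the approximate Bayesian oracle. Concretely I would take $\alpha = \tfrac{e}{e-1}$, so that $\tfrac1\alpha = 1-\tfrac1e$, and check that $M_{greedy}$ meets the reward form of Definition~\ref{defn:oracle} with this $\alpha$.

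First I would verify the oracle guarantee and the running time. For any distribution $\vec{w}$ over $[m]$, the combined objective $F(S)=\sum_{i=1}^m \vec{w}[i]\,f_i(S)$ is a nonnegative combination of monotone submodular functions, hence itself monotone and submodular; so by the classical analysis of the greedy algorithm~\cite{Nemhauser1978} the set returned by Algorithm~\ref{alg:greedy} satisfies $F(S_{cur}) \ge \bigl(1-\tfrac1e\bigr)\max_{S:|S|\le k}F(S) = \tfrac1\alpha\max_{S:|S|\le k}F(S)$, which is exactly the reward form of the oracle condition used in Section~\ref{sec:submodular}. For the running time, each call to $M_{greedy}$ runs $k\le n$ rounds, each scanning at most $n$ elements and evaluating $m$ marginal contributions; Algorithm~\ref{alg:main} makes $T$ such calls plus $T$ multiplicative-weights updates over $[m]$, so the total is $\poly(T,n)$ (absorbing the dependence on $m$).

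Next I would run the no-regret argument of Theorem~\ref{thm:distributionally-robust} with the inequalities reoriented for rewards. Let $\vec{w}_t$ be the multiplicative-weights distribution that the adversary maintains against the rewards $f_i(S_\tau)$ (now the adversary seeks to \emph{minimize} the learner's reward), and let $S_t = M_{greedy}(\vec{w}_t)$. For any feasible $S$ and any round $t$ we have $\min_{i\in[m]}f_i(S)\le \E_{I\sim\vec{w}_t}[f_I(S)]$; averaging over $t$, taking $\max_{S:|S|\le k}$, pushing the max inside the average, and applying the oracle guarantee in each round yields
\begin{equation*}
\tau = \max_{S:|S|\le k}\min_{i\in[m]}f_i(S) \le \frac{1}{T}\sum_{t=1}^T\max_{S:|S|\le k}\E_{I\sim\vec{w}_t}[f_I(S)] \le \frac{\alpha}{T}\sum_{t=1}^T \E_{I\sim\vec{w}_t}[f_I(S_t)].
\end{equation*}
On the other hand, the refined multiplicative-weights regret bound behind Theorem~\ref{thm:faster} — using that $f_i(S_t)^2\le f_i(S_t)$ since $f_i\in[0,1]$, so that the second-moment term in the regret is dominated by the first moment — gives
\begin{equation*}
(1-\eta)\cdot\frac{1}{T}\sum_{t=1}^T \E_{I\sim\vec{w}_t}[f_I(S_t)] \le \min_{i\in[m]}\frac{1}{T}\sum_{t=1}^T f_i(S_t) + \frac{\log m}{\eta T} = \min_{i\in[m]}\E_{S\sim\Po}[f_i(S)] + \frac{\log m}{\eta T}.
\end{equation*}
Chaining the two displays and using $\tfrac1\alpha = 1-\tfrac1e$ yields $\min_{i\in[m]}\E_{S\sim\Po}[f_i(S)] \ge \tfrac{1-\eta}{\alpha}\tau - \tfrac{\log m}{\eta T} = \bigl(1-\tfrac1e\bigr)(1-\eta)\tau - \tfrac{\log m}{\eta T}$, which is the claim.

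The only real subtlety — and the step I would be most careful about — is the reward/loss bookkeeping: making sure both the approximate-oracle inequality and the no-regret inequality point in the correct direction after passing from the minimization setting of Theorem~\ref{thm:distributionally-robust} to the maximization setting here, and tracking the additive term so that it comes out as $\tfrac{\log m}{\eta T}$ (this is precisely where the $f_i^2\le f_i$ refinement of the gain-form regret bound is needed). Beyond that, the corollary is a mechanical specialization of the already-established theorem together with the standard greedy guarantee, so I do not anticipate any genuine difficulty.
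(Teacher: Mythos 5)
Your proposal is correct and follows essentially the paper's own route: the paper obtains this corollary exactly by combining the $(1-1/e)$ greedy guarantee for convex combinations of monotone submodular functions (Algorithm~\ref{alg:greedy}, via \cite{Nemhauser1978}) with the reward version of the refined regret bound in Theorem~\ref{thm:faster}, which is precisely the second-moment multiplicative-weights argument you reconstruct. Your bookkeeping of the reward-direction inequalities and of the additive $\frac{\log m}{\eta T}$ term matches the paper's intended derivation.
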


As we show in Appendix \ref{app:NP-hard}, computing a single set $S$ that achieves a $(1-1/e)$-approximation to $\tau$ is also $NP$-hard. This is true even if the functions $f_i$ are additive.  However, by allowing a randomized solution over sets we can achieve a constant factor approximation to $\tau$ in polynomial time. 

Since the functions are monotone, the above result implies a simple way of constructing a single set $S^*$ that is of larger size than $k$, which deterministically achieves a constant factor approximation to $\tau$. The latter holds by simply taking the union of the sets $\{S_1,\ldots,S_T\}$ in the support of the distribution returned by Algorithm~\ref{alg:main}.  We get the following bi-criterion approximation scheme. 
\begin{corollary}
Suppose that we run the reward version of Algorithm \ref{alg:main}, with $\eta=\epsilon$ and for $T=\frac{\log(m)}{\tau \epsilon^2}$, returning $\{S_1,\ldots, S_T\}$. Then the set $S^*=S_1\cup\ldots\cup S_T$, which is of size at most $\frac{k\log(m)}{\tau\epsilon^2}$, satisfies
\begin{equation}
\min_{i\in [m]} f_i(S^*) \geq \left(1-\frac{1}{e}-2\epsilon\right)\tau.
\end{equation} 
\end{corollary}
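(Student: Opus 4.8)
The plan is to obtain this as an essentially immediate consequence of the preceding corollary (the reward version of Theorem~\ref{thm:distributionally-robust} run with the greedy oracle $M_{greedy}$) together with monotonicity of the reward functions $f_i$. The cardinality bound is the easy part: $S^{*}$ is a union of $T$ sets, each produced by the greedy oracle and hence of size at most $k$, so $|S^{*}| \le kT = \frac{k\log(m)}{\tau\epsilon^{2}}$.

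For the value guarantee I would first plug the chosen parameters into the bound of the preceding corollary. With $\eta=\epsilon$ and $T=\frac{\log(m)}{\tau\epsilon^{2}}$ the additive term simplifies to $\frac{\log(m)}{\eta T}=\tau\epsilon$, so for the uniform distribution $\Po$ over $\{S_{1},\dots,S_{T}\}$ and every $i\in[m]$,
\[
\E_{S\sim\Po}\!\left[f_{i}(S)\right] \;\ge\; \left(1-\tfrac{1}{e}\right)(1-\epsilon)\,\tau-\epsilon\tau \;\ge\; \left(1-\tfrac{1}{e}-2\epsilon\right)\tau,
\]
where the last inequality uses $\left(1-\tfrac{1}{e}\right)\epsilon\le\epsilon$.

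Finally I would convert this expectation bound into a guarantee for the single set $S^{*}$ using monotonicity. Since $\Po$ is uniform, $\E_{S\sim\Po}[f_{i}(S)]=\frac{1}{T}\sum_{t=1}^{T}f_{i}(S_{t})\le\max_{t}f_{i}(S_{t})$; and since each $S_{t}\subseteq S^{*}$ and $f_{i}$ is monotone, $f_{i}(S^{*})\ge f_{i}(S_{t})$ for every $t$, so chaining gives $f_{i}(S^{*})\ge\E_{S\sim\Po}[f_{i}(S)]\ge(1-\tfrac{1}{e}-2\epsilon)\tau$; taking the minimum over $i\in[m]$ finishes the proof. There is no genuine obstacle here — the conceptual content is entirely in the preceding results — and the only mild points are the arithmetic simplification of the regret term and the implicit conventions that $T$ be rounded up to an integer and that the algorithm be run with a lower bound on $\tau$ (an underestimate only enlarges $T$ and strengthens the bound).
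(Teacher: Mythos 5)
Your proof is correct and follows essentially the same route the paper intends: instantiate the preceding corollary's bound $\left(1-\frac{1}{e}\right)(1-\eta)\tau - \frac{\log(m)}{\eta T}$ with $\eta=\epsilon$ and $T=\frac{\log(m)}{\tau\epsilon^2}$ to get $(1-\frac{1}{e}-2\epsilon)\tau$ for the uniform distribution, then use monotonicity of each $f_i$ and $S_t\subseteq S^*$ to pass from the average over $\{S_1,\ldots,S_T\}$ to the union, with the size bound $|S^*|\le kT$ immediate. Your added remarks about rounding $T$ and needing a lower bound on $\tau$ are reasonable technical caveats that the paper leaves implicit.
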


\begin{figure}[t]
\begin{center}
    \includegraphics[scale = .12]{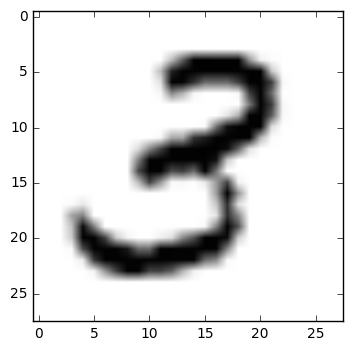}
    \includegraphics[scale = .12]{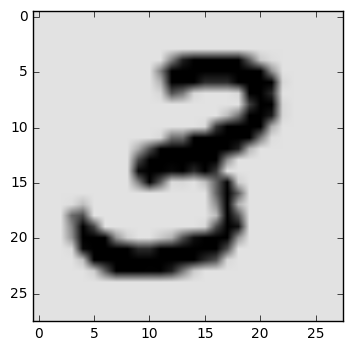}
    \includegraphics[scale = .12]{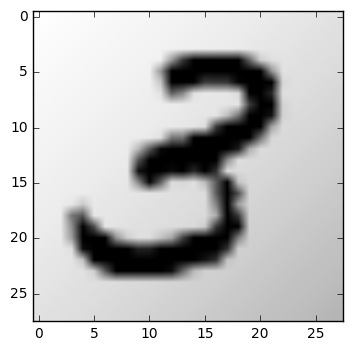}
    \includegraphics[scale = .12]{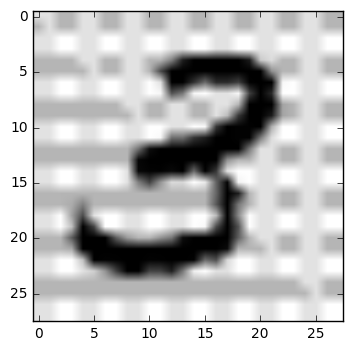}
    \hspace{0.5cm}
    \includegraphics[scale = .12]{three_none.png}
    \includegraphics[scale = .12]{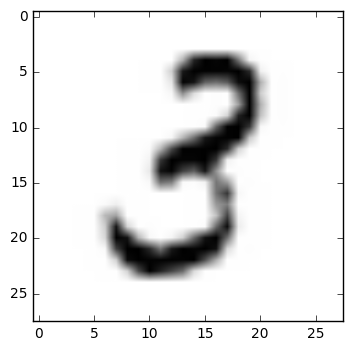}
    \includegraphics[scale = .12]{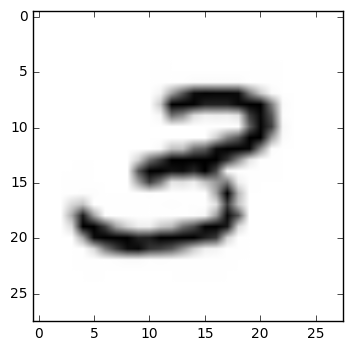}
    \includegraphics[scale = .12]{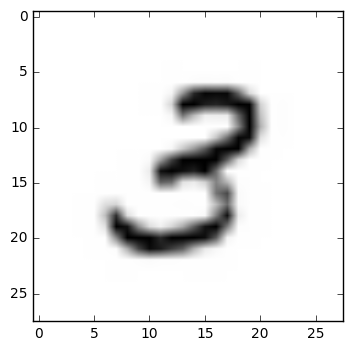}\\
    \vspace{0.5cm}
    \includegraphics[scale = .12]{three_none.png}
    \includegraphics[scale = .12]{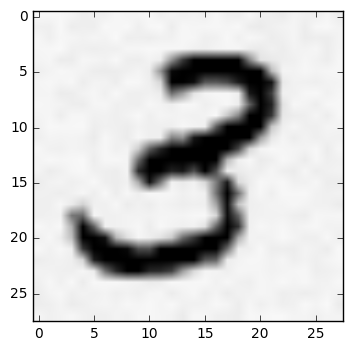}
    \includegraphics[scale = .12]{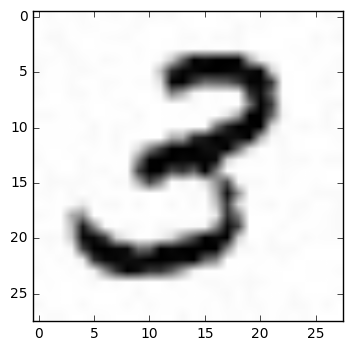}
    \includegraphics[scale = .12]{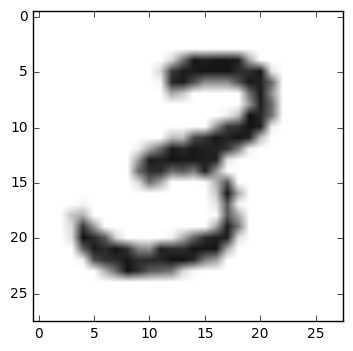}
    \hspace{0.5cm}
    \includegraphics[scale = .12]{three_none.png}
    \includegraphics[scale = .12]{three_checkerboard.png}
    \includegraphics[scale = .12]{three_bothshrink.png}
    \includegraphics[scale = .12]{three_negnoise.png}
    \end{center}
\caption{ Sample MNIST image with each of the corruptions applied to it. Background Corruption Set \& Shrink Corruption Set (top). Pixel Corruption Set \& Mixed Corruption Set (bottom).}\label{fig:corruptions1}   
\end{figure}

\section{Experiments}

\subsection{Robust Classification with Neural Networks}\label{sec:neural-nets}

A classic application of our robust optimization framework is classification with neural networks for corrupted or perturbed datasets. We have a data set $Z$ of pairs $(z,y)$ of an image $z\in\Z$ and label $y\in \Y$ that can be corrupted in $m$ different ways which produces data sets $Z_{1},\ldots,Z_{m}$. The hypothesis space $H$ is the set of all neural nets of some fixed architecture and for each possible assignment of weights. We denote each such hypothesis with $h(\cdot;\theta): \Z\rightarrow \Y$ for $\theta\in \R^d$, with $d$ being the number of parameters (weights) of the neural net. If we let $D_i$ be the uniform distribution over each corrupted data set $Z_i$, then we are interested in minimizing the empirical cross-entropy (aka multinomial logistic) loss in the worst case over these different distributions $D_i$. The latter is a special case of our robust statistical learning framework from Section \ref{sec:statistical}. 

Training a neural network is a non-convex optimization problem and we have no guarantees on its performance. We instead assume that for any given  distribution $D$ over pairs $(z,y)$ of images and labels and for any loss function $\ell(h(z;\theta),y)$, training a neural net with \emph{stochastic gradient descent} run on images drawn from $D$ can achieve an $\alpha$ approximation to the optimal expected loss, i.e. $\min_{\theta\in \R^d}\E_{(z,y)\sim D}\left[\ell(h(z;\theta),y)\right]$. Notice that this implies an $\alpha$-approximate Bayesian Oracle for the corrupted dataset robust training problem: for any distribution $\vec{w}$  over the different corruptions $[m]$, the Bayesian oracle asks to give an $\alpha$-approximation to the minimization problem:
\begin{align}
\min_{\theta\in \R^d} \sum_{i=1}^{m} \vec{w}[i]\cdot \E_{(z,y)\sim D_i}\left[\ell(h(z;\theta),y)\right]
\end{align}
The latter is simply another expected loss problem with distribution over images being the mixture distribution defined by first drawing a corruption index $i$ from $\vec{w}$ and then drawing a corrupted image from distribution $D_i$. Hence, our oracle assumption implies that SGD on this mixture is an $\alpha$-approximation. By linearity of expectation, an alternative way of viewing the Bayesian oracle problem is that we are training a neural net on the original distribution of images, but with loss function being the weighted combination of loss functions $\sum_{i=1}^m \vec{w}[i]\cdot \ell(h(c_i(z);\theta),y)$, where $c_i(z)$ is the $i$-th corrupted version of image $z$. In our experiments we implemented both of these interpretations of the Bayesian oracle, which we call the \textit{Hybrid Method} and \textit{Composite Method}, respectively, when designing our neural network training scheme (see Figure \ref{fig:hybrid} and Figure \ref{fig:composite} in Appendix \ref{sec:appendix-neural-nets}). 
Finally, because we use the cross-entropy loss, which is convex in the prediction of the neural net, we can also apply Theorem \ref{thm:improper-learning} to get that the ensemble neural net, which takes the average of the predictions of the neural nets created at each iteration of the robust optimization, will also achieve good worst-case loss (we refer to this as \textit{Ensemble Bottleneck Loss}).

\paragraph{Experiment Setup.}
We use the MNIST handwritten digits data set containing $55000$ training images, $5000$ validation images, and $10000$ test images, each image being a $28 \times 28$ pixel grayscale image. The intensities of these $576$ pixels (ranging from $0$ to $1$) are used as input to a neural network that has $1024$ nodes in its one hidden layer. The output layer uses the softmax function to give a distribution over digits $0$ to $9$. The activation function is ReLU and the network is trained using Gradient Descent with learning parameter $0.5$ through $500$ iterations of mini-batches of size $100$.

In general, the corruptions can be any black-box corruption of the image. In our experiments, we consider four four types of corruption ($m=4$). See Appendix \ref{sec:appendix-neural-nets} for details about corruptions.

\paragraph{Baselines.} We consider three baselines: (i) \textit{Individual Corruption}: for each corruption type $i \in [m]$, we construct an oracle that trains a neural network using the training data perturbed by corruption $i$, and then returns the trained network weights as $\theta_t$, for every $t=1,\ldots,T$. This gives $m$ baselines, one for each corruption type; (ii) \textit{Even Split}: this baseline alternates between training with different corruption types between iterations. In particular, call the previous $m$ baseline oracles $O_1,...,O_m$. Then this new baseline oracle will produce $\theta_t$ with $O_{i+1}$, where $i \equiv t \mod m$, for every $t=1,...,T$; (iii) \textit{Uniform Distribution}: This more advanced baseline runs the robust optimization scheme with the Hybrid Method (see Appendix), but without the distribution updates. Instead, the distribution over corruption types is fixed as the discrete uniform $[\frac{1}{m},...,\frac{1}{m}]$ over all $T$ iterations. This allows us to check if the multiplicative weight updates in the robust optimization algorithm are providing benefit.

\paragraph{Results.}

\begin{figure}[t]
\begin{center}
    \includegraphics[scale = .35]{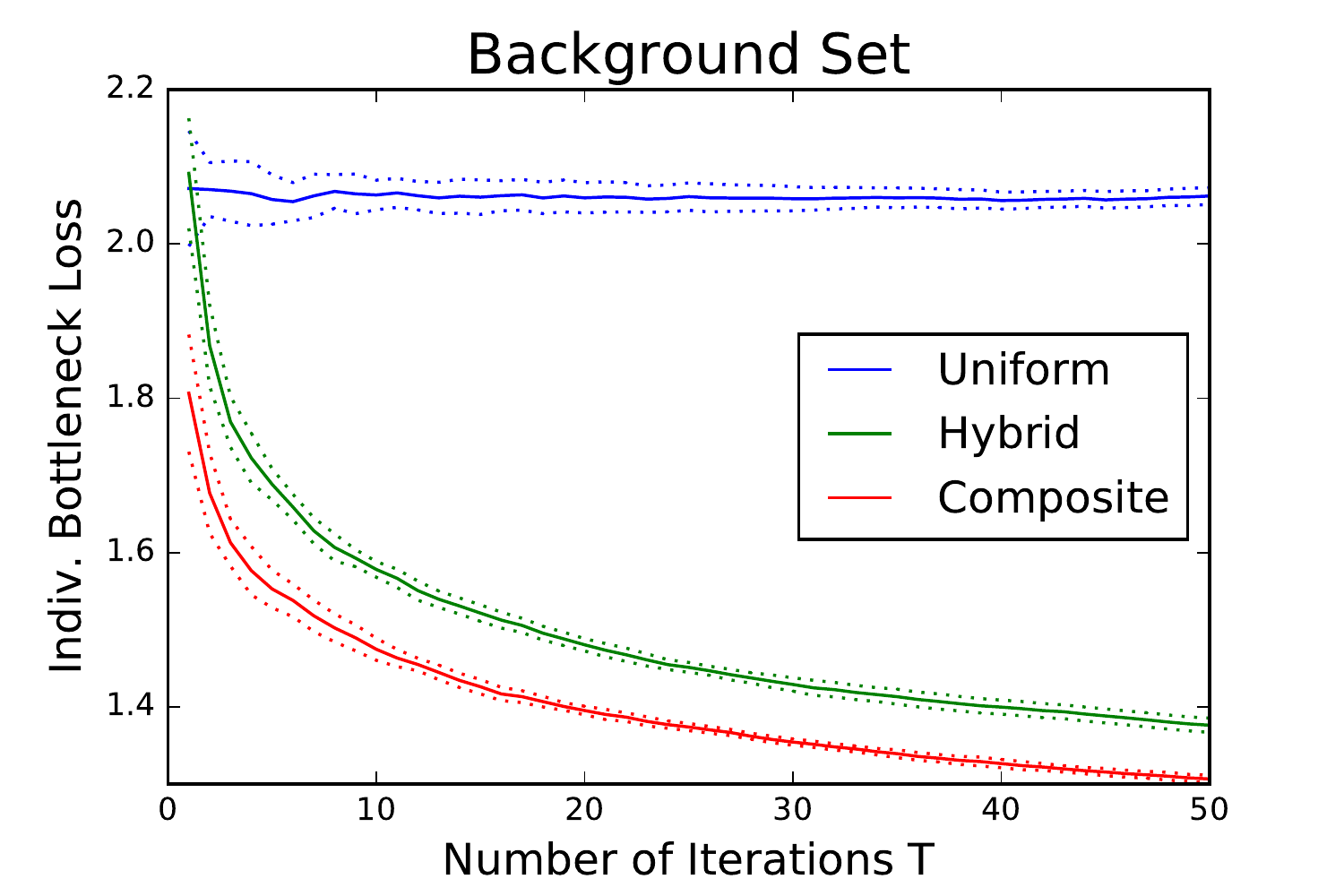}
    \includegraphics[scale = .35]{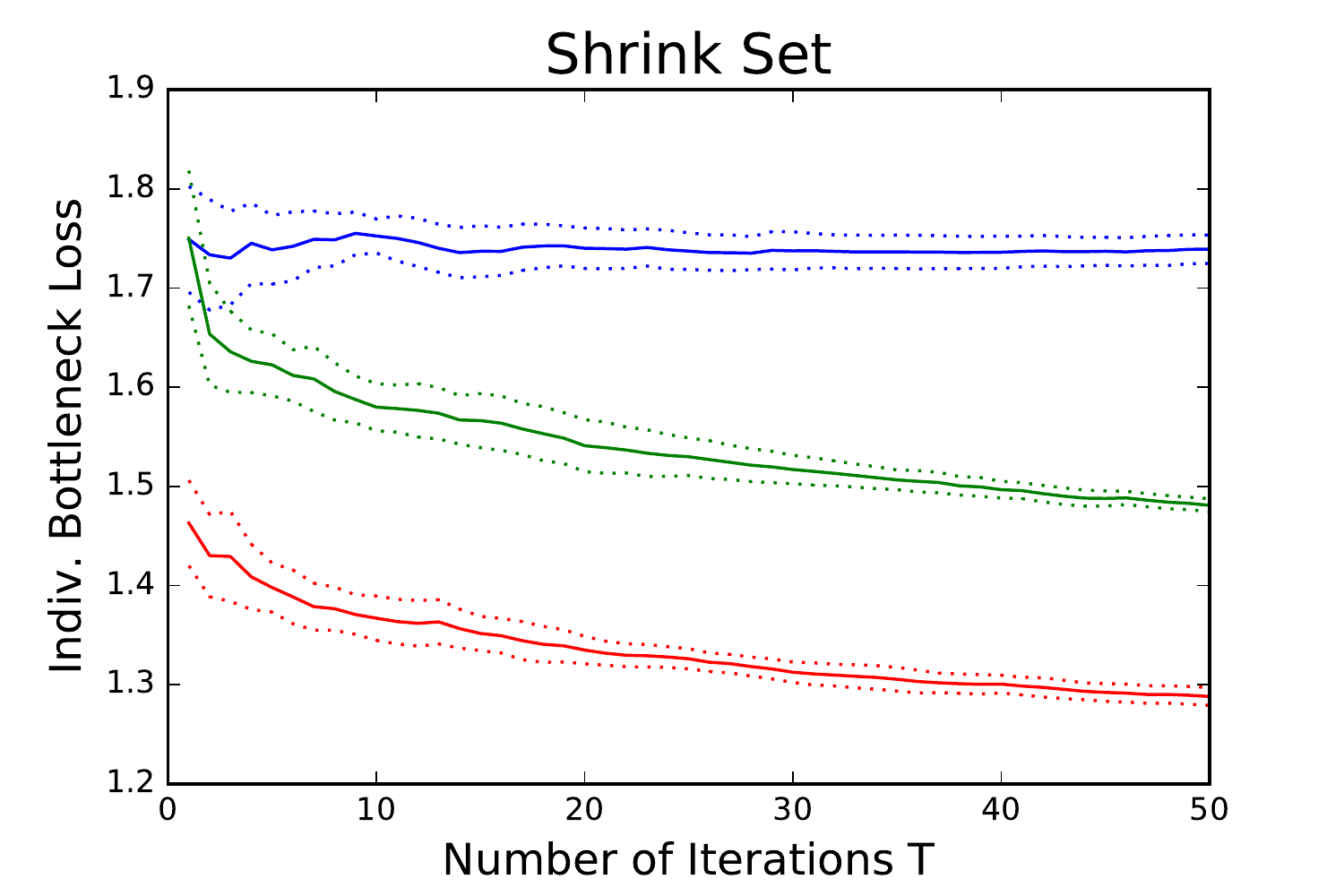}
    \includegraphics[scale = .35]{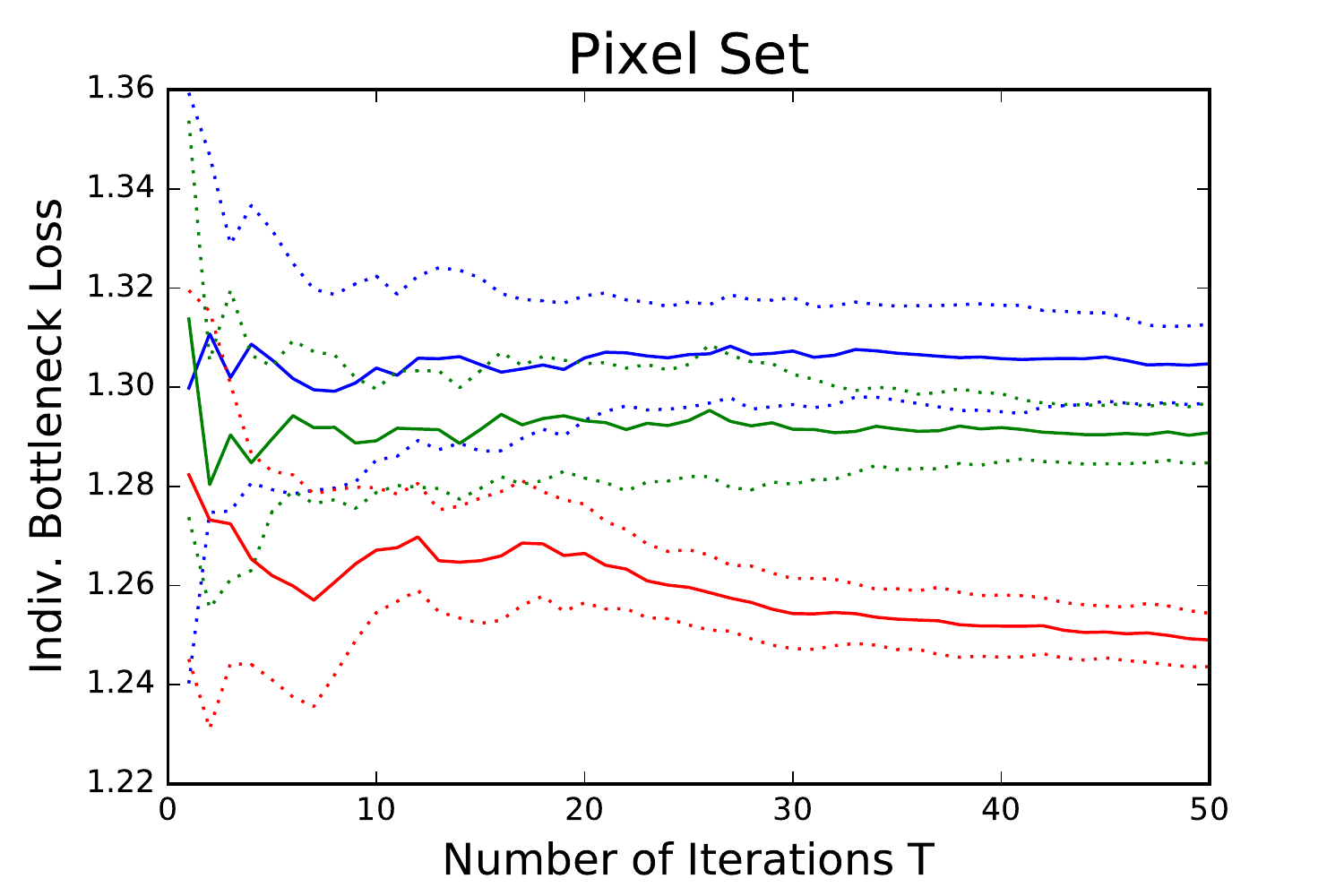}
    \includegraphics[scale = .35]{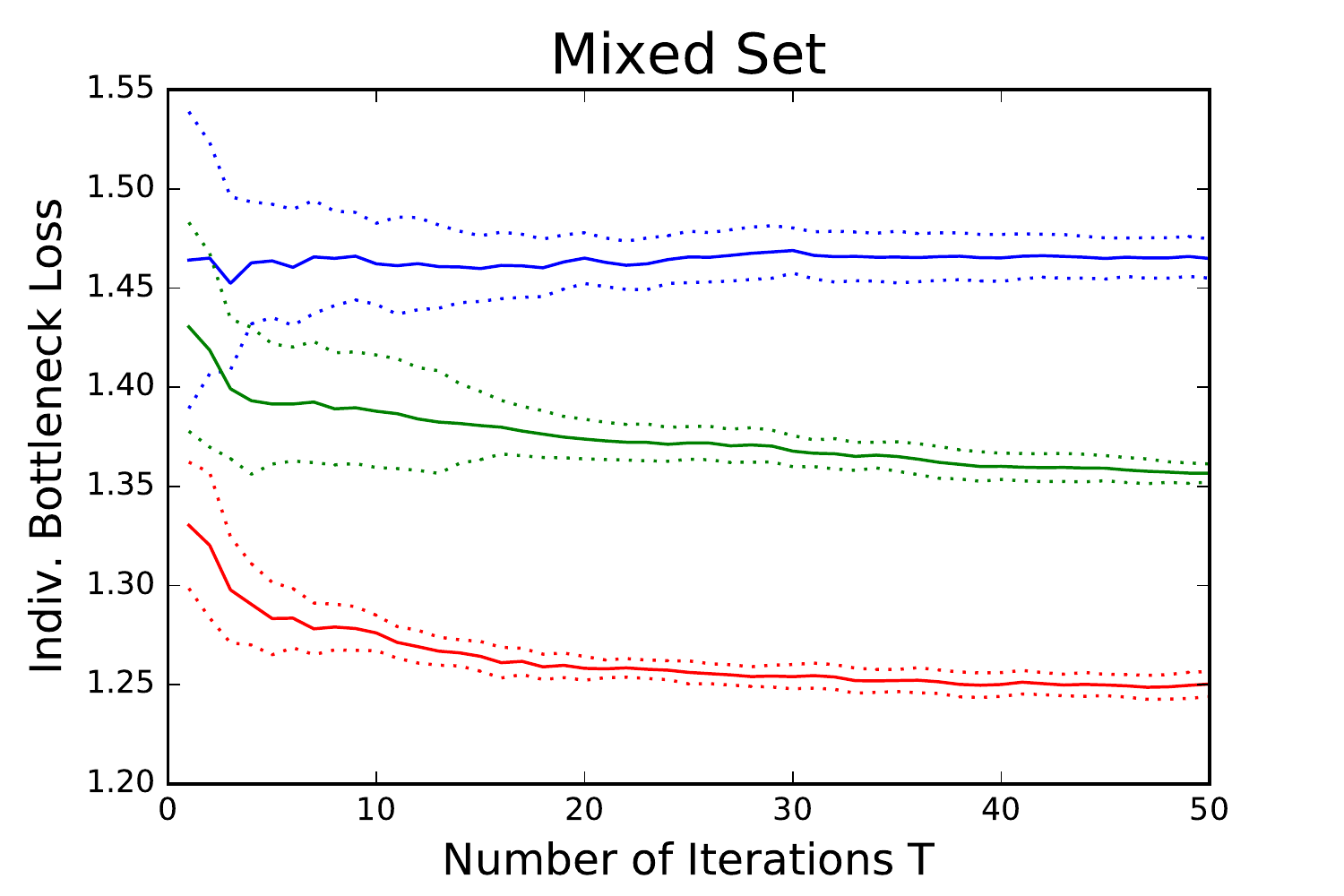}
    \end{center}
    \caption{\footnotesize{Comparison of methods, showing mean of $10$ independent runs and a $95\%$ confidence band. The criterion is \textit{Individual Bottleneck Loss}: $\min_{[m]} E_{\theta \sim P} \left[\ell(h(z;\theta),y)\right]$, where $P$ is uniform over all solutions $\theta_i$ for that method. Baselines (i) and (ii) are not shown as they produce significantly higher loss (see Appendix). }}\label{fig:neuralnetworks} 
\end{figure}

The Hybrid and Composite Methods produce results far superior to all three baseline types, with differences both substantial in magnitude and statistically significant. The more sophisticated Composite Method outperforms the Hybrid Method. Increasing $T$ improves performance, but with diminishing returns--largely because for sufficiently large $T$, the distribution over corruption types has moved from the initial uniform distribution to some more optimal $\textit{stable distribution}$ (see Appendix for details). All these effects are consistent across the 4 different corruption sets tested. The Ensemble Bottleneck Loss is empirically much smaller than Individual Bottleneck Loss. For the best performing algorithm, the Composite Method, the mean Ensemble Bottleneck Loss (mean Individual Bottleneck Loss) with $T=50$ was 0.34 (1.31) for Background Set, 0.28 (1.30) for Shrink Set, 0.19 (1.25) for Pixel Set, and 0.33 (1.25) for Mixed Set. Thus combining the $T$ classifiers obtained from robust optimization is practical for making predictions on new data.
\subsection{Robust Influence Maximization}

We apply the results of Section~\ref{sec:submodular} to the robust influence maximization problem.
Given a directed graph $G=(V,E)$, 
the goal is to pick a \emph{seed set} $S$ of $k$ nodes that maximize an influence function $f_G(S)$, where $f_G(S)$ is the expected number of individuals influenced by opinion of the members of $S$.  We used $f_G(S)$ to be the number of nodes reachable from $S$ (our results extend to other models).


%

In robust influence maximization, the goal is to 
maximize influence in the worst-case (\textit{Bottleneck Influence}) over $m$ functions $\{f_1,\ldots,f_m\}$, corresponding to $m$ graphs $\{G_1,\ldots,G_m\}$, for some fixed seed set of size $k$.
This is a special case of robust submodular maximization after rescaling to $[0,1]$. 

\paragraph{Experiment Setup.}  Given a base directed graph $G(V,E)$, we produce $m$ graphs $G_i=(V,E_i)$ by randomly including each edge $e\in E$ with some probability $p$. We consider two base graphs and two sets of parameters for each: (i) The \emph{Wikipedia Vote Graph}~\cite{snap}. In Experiment $A$, the parameters are $|V|=7115$, $|E|=103689$, $m=10$, $p=0.01$ and $k=10$. In Experiment $B$, change $p=0.015$ and $k=3$. (ii) The \emph{Complete Directed Graph} on $|V|=100$ vertices. In Experiment $A$, the parameters are $m=50$, $p=0.015$ and $k=2$. In Experiment $B$, change $p=0.01$ and $k=4$.

\paragraph{Baselines.} We compared our algorithm (Section \ref{sec:submodular}) to three baselines: (i)
  \textit{Uniform over Individual Greedy Solutions}: Apply greedy maximization (Algorithm \ref{alg:greedy}) on each graph separately, to get solutions $\{S_1^{g},\ldots,S_m^g\}$. Return the uniform distribution over these solutions; (ii) \textit{Greedy on Uniform Distribution over Graphs}: Return the output of greedy submodular maximization (Algorithm \ref{alg:greedy}) on the uniform distribution over influence functions.  This can be viewed as maximizing expected influence; (iii)
  \textit{Uniform over Greedy Solutions on Multiple Perturbed Distributions}: Generate $T$ distributions $\{\vec{w}_1^*,\ldots,\vec{w}_T^*\}$ over the $m$ functions, by  randomly perturbing the uniform distribution.  Perturbation magnitudes were chosen s.t. $\vec{w}_t^*$ has the same expected $\ell_1$ distance from uniform as the distribution returned by robust optimization at iteration $t$.
  

\paragraph{Results.}

\begin{figure}[t]
\begin{center}
    \includegraphics[scale = .35,angle=0,origin=c]{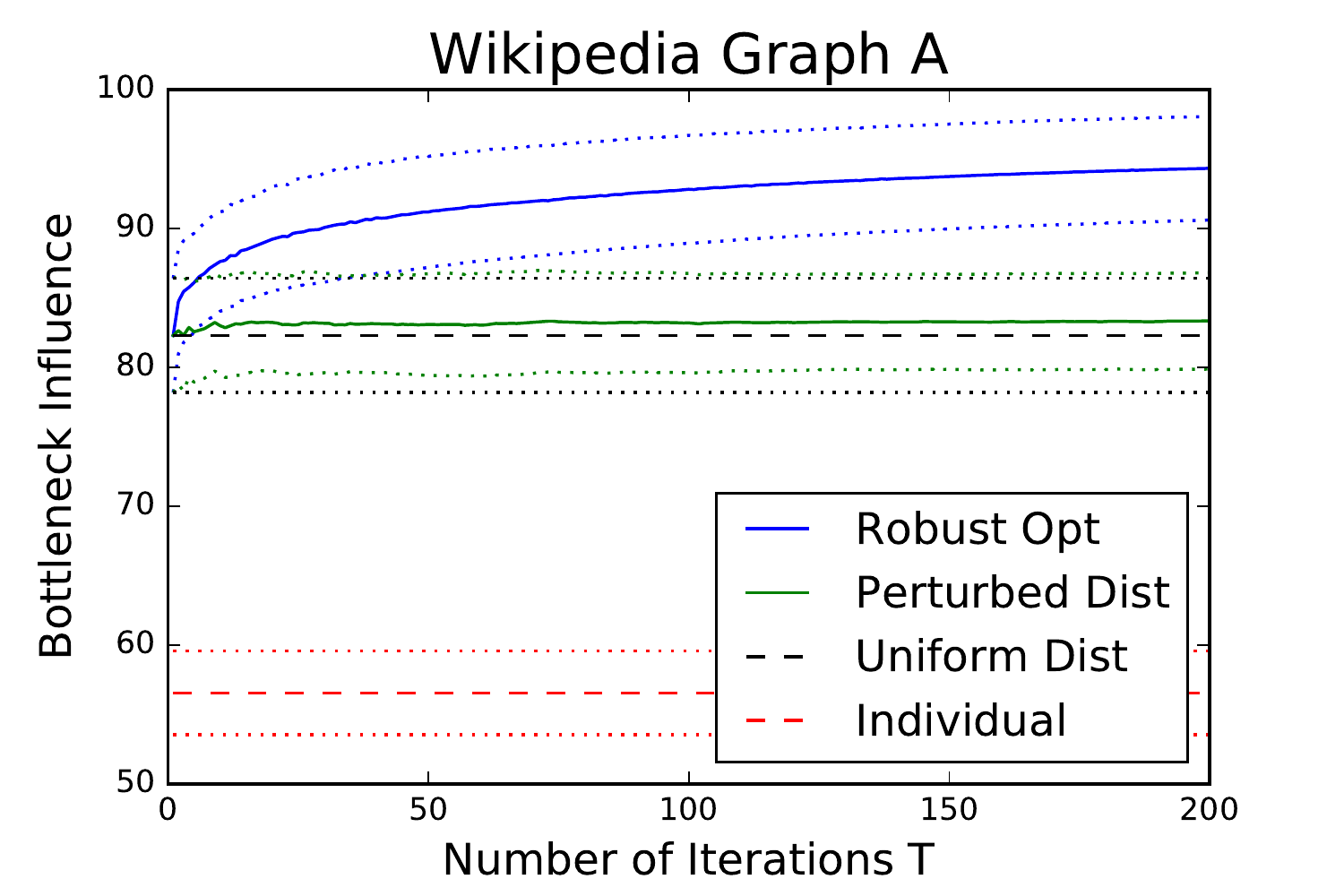}
    \includegraphics[scale = .35,angle=0,origin=c]{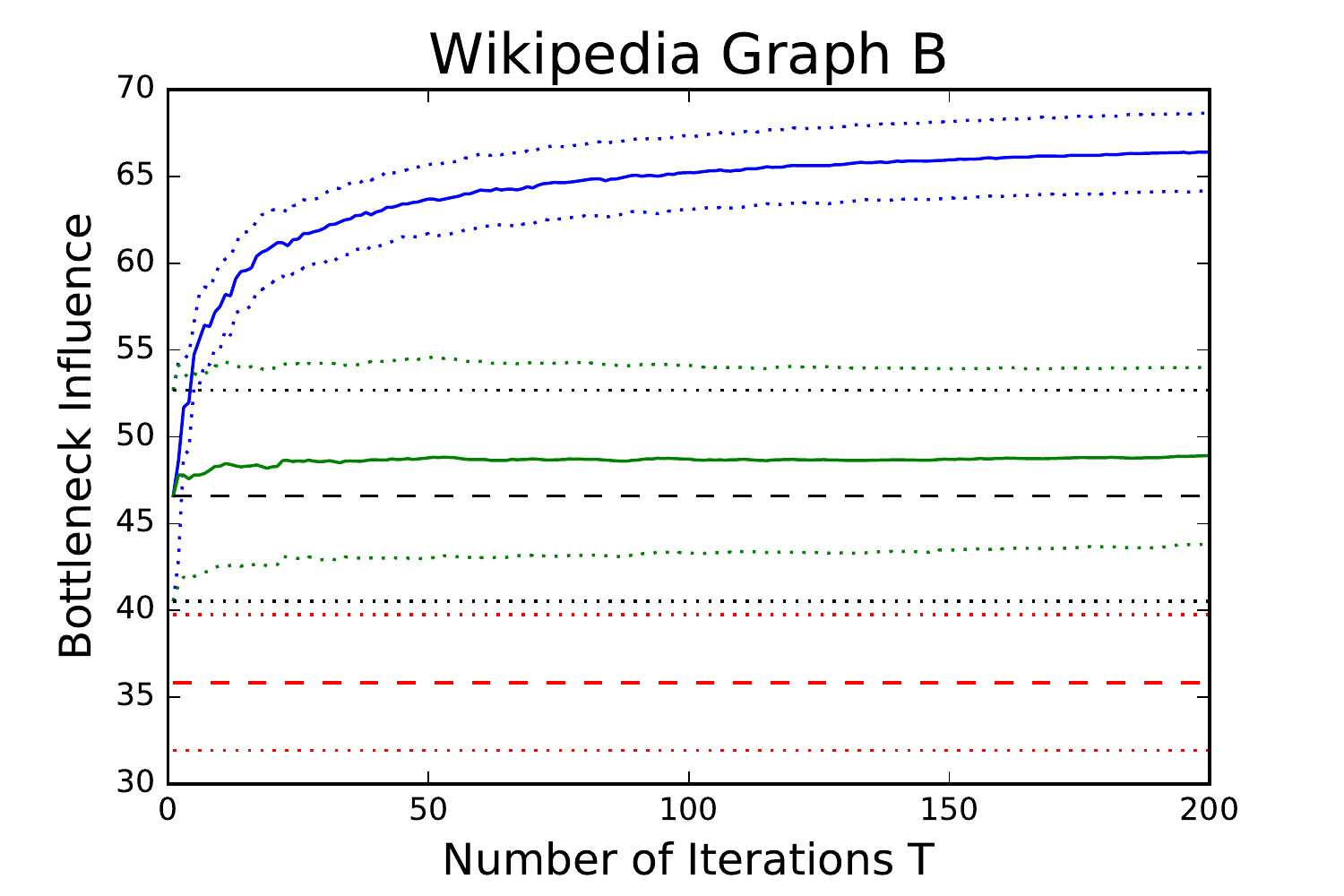}
    \includegraphics[scale = .35,angle=0,origin=c]{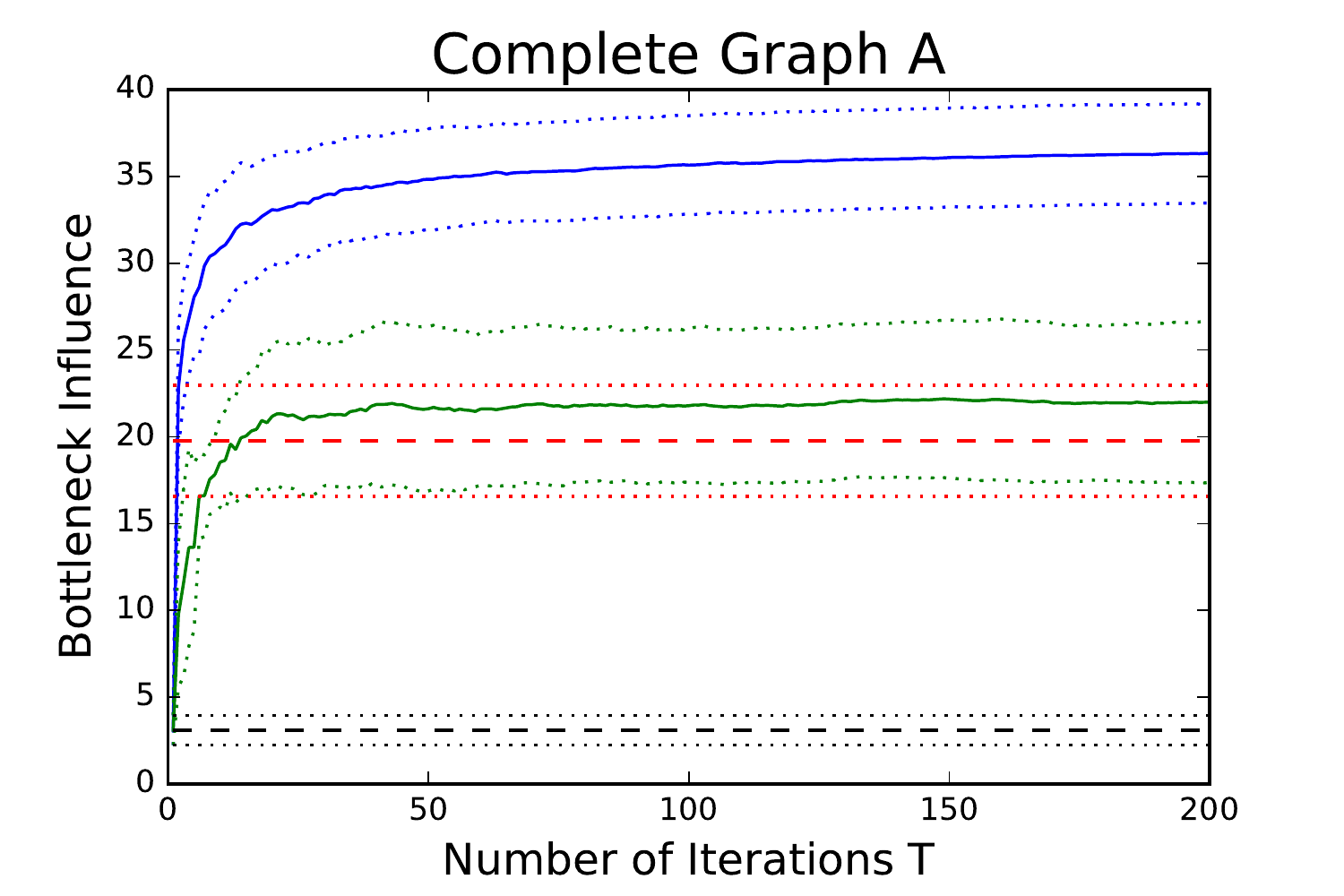}
    \includegraphics[scale = .35,angle=0,origin=c]{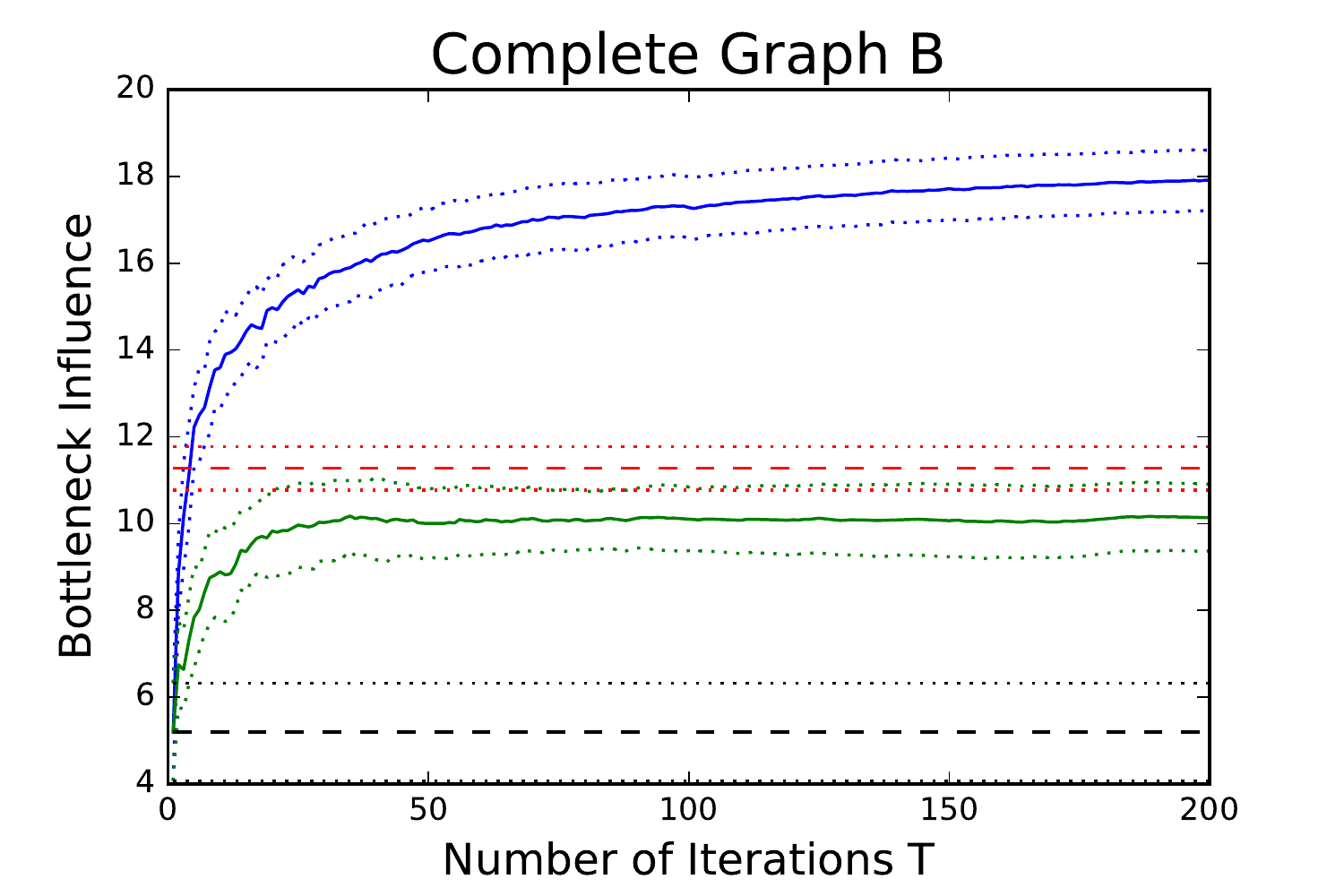}
    \caption{\footnotesize{Comparison for various $T$, showing mean Bottleneck Influence and $95\%$ confidence on $10$ runs.}}
    \end{center}
\end{figure}

%


For both graph experiments, robust optimization outperforms all baselines on Bottleneck Influence; the difference is statistically significant as well as large in magnitude for all $T>50$. 
Moreover, the individual seed sets generated at each iteration $t$ of robust optimization themselves achieve empirically good influence as well (see Appendix for details).

\bibliographystyle{plain}
\bibliography{bib-AGT}

\newpage
\setcounter{page}{1}

 \begin{center}
 \bf \Large Supplementary material for \\ ``\mytitle''
 \end{center}
\begin{appendix}
\section{Faster Convergence to Approximate Solution}
\label{app:faster}

\begin{theorem}[Faster Convergence]\label{thm:faster}
Given access to an $\alpha$-approximate distributional oracle, Algorithm \ref{alg:main} with some parameter $\eta$ computes a distribution $\Po$ over solutions, defined as a uniform distribution over a set $\{x_1,\ldots,x_T\}$, such that
\begin{equation}
\max_{i\in [m]} \E_{x\sim \Po}\left[L_i(x)\right] \leq \alpha(1+\eta) \tau + \frac{\log(m)}{\eta T}.
\end{equation}
In the case of robust reward maximization, the reward version of Algorithm \ref{alg:main} computes a distribution $\Po$ such that:
\begin{equation}
\max_{i\in [m]} \E_{x\sim \Po}\left[L_i(x)\right] \geq \alpha(1-\eta) \tau - \frac{\log(m)}{\eta T}.
\end{equation}
\end{theorem}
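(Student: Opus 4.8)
The plan is to re-run the zero-sum-game argument from the proof of Theorem~\ref{thm:distributionally-robust}, but to feed it the sharper \emph{multiplicative} regret bound for multiplicative weights; this is exactly what converts the additive $O(\sqrt{\log m/T})$ slack obtained there into the $(1+\eta)$-multiplicative plus $\frac{\log m}{\eta T}$-additive form claimed here. As before, interpret Algorithm~\ref{alg:main} as no-regret play on the game in which the learner picks $x\in\X$, the adversary picks $i\in[m]$, and the payoff to the adversary is $L_i(x)\in[0,1]$; the update $\vec{w}_t[i]\propto\exp\{\eta\sum_{\tau<t}L_i(x_\tau)\}$ is precisely exponential weights on these gains with step size $\eta$, and $x_t=M(\vec{w}_t)$.

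\textbf{Key step: multiplicative MWU bound.} I would first establish that for every $i\in[m]$ and every $\eta\in(0,1]$,
\[
\sum_{t=1}^T L_i(x_t)\ \le\ (1+\eta)\sum_{t=1}^T \E_{I\sim\vec{w}_t}\big[L_I(x_t)\big]\ +\ \frac{\log m}{\eta}.
\]
This is the standard potential-function computation: with $\Phi_t=\sum_i w_t[i]$ and $w_1\equiv 1$, use $e^{\eta a}\le 1+\eta a+\eta^2 a^2$ together with $a^2\le a$ for $a\in[0,1]$ to obtain $\Phi_{t+1}\le\Phi_t\exp\!\big(\eta(1+\eta)\,\E_{I\sim\vec{w}_t}[L_I(x_t)]\big)$, hence $\Phi_{T+1}\le m\exp\!\big(\eta(1+\eta)\sum_t \E_{I\sim\vec{w}_t}[L_I(x_t)]\big)$; comparing with $\Phi_{T+1}\ge w_{T+1}[i]=\exp\!\big(\eta\sum_t L_i(x_t)\big)$, taking logarithms and dividing by $\eta$ gives the displayed inequality. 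The $[0,1]$ boundedness of the payoffs (the $a^2\le a$ trick) is what makes the constants come out clean; this refinement is the only genuinely new ingredient relative to Theorem~\ref{thm:distributionally-robust}, and I expect it to be the main, if minor, obstacle.

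\textbf{Combining with the oracle.} The rest is unchanged from the proof of Theorem~\ref{thm:distributionally-robust}: using the $\alpha$-approximation guarantee of $M$ together with $\tau=\min_x\max_i L_i(x)\ge\min_x\frac1T\sum_t\E_{I\sim\vec{w}_t}[L_I(x)]\ge\frac1T\sum_t\min_x\E_{I\sim\vec{w}_t}[L_I(x)]\ge\frac1{\alpha}\cdot\frac1T\sum_t\E_{I\sim\vec{w}_t}[L_I(x_t)]$, we get $\frac1T\sum_t\E_{I\sim\vec{w}_t}[L_I(x_t)]\le\alpha\tau$. Dividing the key-step inequality by $T$, taking $\max_i$ on the left and substituting this bound yields
\[
\max_{i\in[m]}\E_{x\sim\Po}\big[L_i(x)\big]=\max_{i\in[m]}\frac1T\sum_{t=1}^T L_i(x_t)\ \le\ (1+\eta)\,\alpha\tau+\frac{\log m}{\eta T},
\]
which is the claimed bound.

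\textbf{Reward version.} For robust reward maximization the argument is symmetric: the adversary now runs exponential weights to \emph{minimize} the learner's reward, i.e.\ on ``losses'' $f_i(x_\tau)\in[0,1]$, and the analogous potential argument (using $e^{-\eta a}\le 1-\eta a+\eta^2 a^2$ for $a\ge0$, and again $a^2\le a$) gives $\min_i\sum_t f_i(x_t)\ge(1-\eta)\sum_t\E_{I\sim\vec{w}_t}[f_I(x_t)]-\frac{\log m}{\eta}$; the oracle/minimax step gives $\frac1T\sum_t\E_{I\sim\vec{w}_t}[f_I(x_t)]\ge\alpha\tau$ (now with the reward-oracle ratio $\alpha\le 1$), and combining the two yields the claimed $\min_{i\in[m]}\E_{x\sim\Po}[f_i(x)]\ge\alpha(1-\eta)\tau-\frac{\log m}{\eta T}$.
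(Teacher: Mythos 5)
Your proposal is correct and follows essentially the same route as the paper: both reduce the claim to the multiplicative (second-order) regret bound for exponential weights, i.e.\ $(1+\eta)\frac{1}{T}\sum_t \E_{I\sim\vec{w}_t}[L_I(x_t)] \geq \max_i \frac{1}{T}\sum_t L_i(x_t) - \frac{\log m}{\eta T}$, and then chain it with the oracle guarantee exactly as in Theorem~\ref{thm:distributionally-robust}. The only difference is presentational: the paper cites this refined regret bound (via the $\E[L_I^2]\le \E[L_I]$ step) from the literature, whereas you re-derive it with the standard potential-function argument, which is a valid substitute.
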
 
\begin{proof}
We present the case of lossses as the result for the case of rewards follows along similar lines. The proof follows similar lines as that of Theorem \ref{thm:distributionally-robust}. The main difference is that we use a stronger property of the Exponential Weight Updates algorithm. In particular it is known that the regret of EWU, when run on a sequence of rewards that lie in $[-1,1]$ is at most \cite{Arora2012}:
\begin{equation}
\epsilon(T) = \eta \sum_{t=1}^T \E_{I\sim \vec{w}_t}\left[L_I(x_t)^2\right]+ \frac{\log(m)}{\eta T} \leq  \eta \sum_{t=1}^T \E_{I\sim \vec{w}_t}\left[L_I(x_t)\right]+ \frac{\log(m)}{\eta T}
\end{equation}
where the second inequality follows from the fact that $L_i(x)\in [0,1]$. Thus, by the definition of regret, we can write:
\begin{equation}
(1+\eta) \frac{1}{T}\sum_{t=1}^T \E_{I\sim \vec{w}_t}\left[L_I(x_t)\right] \geq \max_{i\in [m]} \frac{1}{T}\sum_{t=1}^T L_i(x_t) - \frac{\log(m)}{\eta T}.
\end{equation}
Combining the above with the guarantee of the distributional oracle we have
\begin{align*}
\tau = \min_{x\in \X}\max_{i\in [m]}  L_i(x) \geq~& \min_{x\in X}\frac{1}{T}\sum_{t=1}^T \E_{I\sim \vec{w}_t}\left[L_I(x)\right]\\
\geq~& \frac{1}{T}\sum_{t=1}^T \min_{x\in X}\E_{I\sim \vec{w}_t}\left[L_I(x)\right]\\
\geq~& \frac{1}{T}\sum_{t=1}^T \frac{1}{\alpha}\cdot \E_{I\sim \vec{w}_t}\left[L_I(x_t)\right] \tag{By oracle guarantee for each $t$}\\
\geq~& \frac{1}{\alpha(1+\eta)}\cdot \left(\max_{i\in [m]} \frac{1}{T}\sum_{t=1}^T L_i(x_t) - \frac{\log(m)}{\eta T}\right).
\tag{By regret of adversary}
\end{align*}
Thus, if we define with $\Po$ to be the uniform distribution over $\{x_1,\ldots,x_T\}$, then we have derived:
\begin{equation}
\max_{i\in [m]} \E_{x\sim \Po}\left[L_i(x) \right] \leq \alpha(1+\eta)\tau + \frac{\log(m)}{\eta T}
\end{equation}
as required.
\end{proof}

\section{Robust Optimization with Infinite Loss Sets}
\label{app:infinite}
We now extend our main results to the case where the uncertainty about the loss function is more general. In particular, we allow for sets of possible losses $\Ell$ that are not necessary finite. In particular, the loss function depends on a parameter $w\in \W$ that is unknown and which could take any value in a set $\W$. The loss of the learner is a function $L(x,w)$ of both his action $x\in \X$ and this parameter $w\in \W$, and the form of the function $L$ is known. Hence, the set of possible losses is defined as:
\begin{equation}
\Ell = \{L(\cdot,w): w\in \W\}
\end{equation}

Our goal is to find some $x\in \X$ that achieves low loss in the worst-case over loss functions in $\Ell$. For $x \in \X$, write $g(x) = \max_{w\in \W} L(x,w)$ for the worst-case loss of $x$. The minimax optimum is
\begin{equation}
\tau = \min_{x\in \X}g(x) = \min_{x\in \X}\max_{w\in \W} L(x,w).
\end{equation}
Our goal in $\alpha$-approximate robust optimization is to find $x$ such that $g(x) \leq \alpha \tau$. Given a distribution $\Po$ over solutions $\X$, write $g(\Po) = \max_{w \in \W}\E_{x \sim \Po}[L(x,w)]$ for the worst-case expected loss of a solution drawn from $\Po$.  The goal of \emph{improper robust optimization}: find a distribution $\Po$ over solutions $\X$ such that $g(\Po) \leq \alpha \tau$.

We will make the assumption that $L(x,w)$ is concave in $w$, $1$-Lipschitz with respect to $w$ and that the set $\W$ is convex. The case of finite losses that we considered in the main text is a special case where the space $\W$ is the simplex on $m$ coordinates, and where: $L(x,w) = \sum_{i=1}^m w[i] \cdot L_i(x)$.

We will also assume that we are given access to an approximate Bayesian oracle, which finds a solution $x \in \X$ that approximately minimizes a given distribution over loss functions:
\begin{defn}[$\alpha$-Approximate Bayesian Oracle] Given a choice of $w\in \W$, the oracle $M(w)$ computes an $\alpha$-approximate solution $x^* = M(w)$ to the known parameter problem, i.e.:
\begin{equation}
L(x^*,w) \leq \alpha\min_{x\in \X} L(x,w)
\end{equation}
\end{defn}

\subsection{Improper Robust Optimization with Oracles}

We first show that, given access to an $\alpha$-approximate distributional oracle, it is possible to efficiently implement improper $\alpha$-approximate robust optimization, subject to a vanishing additive loss term. The algorithm is a variant of Algorithm \ref{alg:main}, where we replace the Multiplicative Weight Updates algorithm for the choice of $w_t$ with a projected gradient descent algorithm, which works for any convex set $\W$. To describe the algorithm we will need some notation. First we denote with $\Pi_{\Y}(w)$ to be the projection of $w$ on the set $\Y$, i.e. $\Pi_{\W}(w) = \arg\min_{w^*\in \W} \|w^*-w\|_2^2$. Moreover, $\nabla_y L(x,y)$ is the gradient of function $L(x,y)$ with respect to $y$. 

\begin{algorithm}[t]
\caption{Oracle Efficient Improper Robust Optimization with Infinite Loss Sets} 
\label{alg:main-infinite}
\begin{algorithmic}
	\STATE {\bfseries Input:} A convex set $\Y$ and loss function $L(\cdot,\cdot)$ which defines the set of possible losses $\Ell$ 
	\STATE {\bfseries Input:} Approximately optimal Bayesian oracle $M$
	\STATE {\bfseries Input:} Accuracy parameter $T$ and step-size $\eta$
	\FOR{each time step $t\in [T]$}
   	\STATE Set
	\begin{align}
	\theta_{t} =~& \theta_{t-1} + \nabla_y L(x_t,y_t)\label{eqn:proj-1}\\
	w_t =~& \Pi_{\W}\left(\eta\cdot \theta_t\right) \label{eqn:proj-2}
	\end{align}   	
   	\STATE Set $x_t = M(w_t)$
	\ENDFOR
	\STATE Output the uniform distribution over $\{x_1,\ldots,x_T\}$
\end{algorithmic}
\end{algorithm}

\begin{theorem}\label{thm:distributionally-robust-infinite}
Given access to an $\alpha$-approximate distributional oracle, Algorithm \ref{alg:main-infinite}, with $\eta=\frac{\max_{w\in \W}\|w\|_2}{\sqrt{2T}}$ computes a distribution $\Po$ over solutions, defined as a uniform distribution over a set $\{x_1,\ldots,x_T\}$, such that:
\begin{equation}
\max_{w\in \W} \E_{x\sim \Po}\left[L(x,w)\right] \leq \alpha \tau + \max_{w\in \W}\|w\|_2 \sqrt{\frac{2}{T}}
\end{equation}
\end{theorem}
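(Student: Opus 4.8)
The plan is to follow the proof of Theorem~\ref{thm:distributionally-robust} almost verbatim, the only change being that the adversary's no-regret algorithm is now \emph{online projected gradient ascent} over the convex set $\W$ rather than multiplicative weights over the simplex. As before, interpret Algorithm~\ref{alg:main-infinite} as no-regret dynamics for the zero-sum game in which the learner plays $x\in\X$, the adversary plays $w\in\W$, and the adversary's payoff (the learner's loss) is $L(x,w)$. At round $t$ the adversary, having observed $x_1,\dots,x_{t-1}$, sets $w_t=\Pi_{\W}\big(\eta\sum_{s<t}\nabla_w L(x_s,w_s)\big)$ — the update in \eqref{eqn:proj-1}--\eqref{eqn:proj-2} — and the learner responds through the oracle, $x_t=M(w_t)$, so that $L(x_t,w_t)\le\alpha\min_{x\in\X}L(x,w_t)$.

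First I would establish the adversary's regret bound. The functions $f_t(w):=L(x_t,w)$ are concave in $w$ by assumption, so $f_t(w^*)-f_t(w_t)\le\langle\nabla_w L(x_t,w_t),\,w^*-w_t\rangle$, and it therefore suffices to bound the \emph{linearized} regret, which is exactly what the gradient update controls. Since $L$ is $1$-Lipschitz in $w$, every (sub)gradient satisfies $\|\nabla_w L(x_t,w_t)\|_2\le 1$, and since $\|w\|_2\le D:=\max_{w\in\W}\|w\|_2$ for all $w\in\W$, the standard analysis of lazy online gradient ascent (equivalently, follow-the-regularized-leader with regularizer $\frac1{2\eta}\|w\|_2^2$) with $\eta=D/\sqrt{2T}$ gives
\[
\max_{w\in\W}\frac1T\sum_{t=1}^{T}L(x_t,w)-\frac1T\sum_{t=1}^{T}L(x_t,w_t)\;\le\;\epsilon(T):=D\sqrt{\tfrac{2}{T}}.
\]

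Next I would run the same chain of inequalities as in Theorem~\ref{thm:distributionally-robust}, using that for every fixed $w\in\W$ we have $\frac1T\sum_{t=1}^{T}L(x_t,w)=\E_{x\sim\Po}[L(x,w)]$, where $\Po$ is the uniform distribution over $\{x_1,\dots,x_T\}$:
\begin{align*}
\tau=\min_{x\in\X}\max_{w\in\W}L(x,w)
&\ge\min_{x\in\X}\frac1T\sum_{t=1}^{T}L(x,w_t)
\;\ge\;\frac1T\sum_{t=1}^{T}\min_{x\in\X}L(x,w_t)\\
&\ge\frac1T\sum_{t=1}^{T}\frac1\alpha\,L(x_t,w_t)
\;\ge\;\frac1\alpha\Big(\max_{w\in\W}\E_{x\sim\Po}[L(x,w)]-\epsilon(T)\Big),
\end{align*}
where the penultimate step is the oracle guarantee applied at each $w_t$ and the last step is the adversary's regret bound. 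Rearranging yields $\max_{w\in\W}\E_{x\sim\Po}[L(x,w)]\le\alpha\tau+\epsilon(T)$, which is the claim.

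The main obstacle is making the regret step fully rigorous in this general setting: verifying that the eager/lazy indexing in Algorithm~\ref{alg:main-infinite} really implements an online gradient ascent whose per-round regret is at most $\epsilon(T)=D\sqrt{2/T}$, using only that gradients are bounded (from $1$-Lipschitzness) and that $\W$ is convex and bounded — the rest of the argument (the game reformulation and the min-max chaining) is identical to the finite case. It is worth noting that, unlike Corollary~\ref{cor:convex}, there is no derandomization available here: we assume only that $L(x,w)$ is concave in $w$, not convex in $x$, so Jensen cannot be used to collapse $\Po$ to a single $x\in\X$.
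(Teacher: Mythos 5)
Your proposal is correct and follows essentially the same route as the paper's own proof: the same zero-sum game interpretation, the same regret bound $\epsilon(T)=\max_{w\in\W}\|w\|_2\sqrt{2/T}$ for the projected-gradient adversary, and the identical chain of inequalities combining the oracle guarantee with the adversary's no-regret property. The additional detail you give (linearizing the regret via concavity in $w$ and bounding gradient norms by the $1$-Lipschitz constant in the lazy-projection analysis) is precisely what the paper leaves implicit when it cites the standard regret guarantee of projected gradient descent.
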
 
\begin{proof}
We can interpret Algorithm~\ref{alg:main} in the following way.  We define a zero-sum game between a learner and an adversary. The learner's action set is equal to $\X$ and the adversaries action set is $W$. The loss of the learner when he picks $x\in \X$ and the adversary picks $w\in \W$ is defined as $L(x,w)$. The corresponding payoff of the adversary is $L(x,w)$.

We will run no-regret dynamics on this zero-sum game, where at every iteration $t=1,\ldots,T$, the adversary will pick a $w_t\in \W$ and subsequently the learner picks a solution $x_t$. We will be using the projected gradient descent algorithm to compute what $w_t$ is at each iteration, as defined in Equations \eqref{eqn:proj-1} and \eqref{eqn:proj-2}. Subsequently the learner picks a solution $x_t$ that is the output of the $\alpha$-approximate Bayesian oracle on the parameter chosen by the adversary at time-step $t$.  That is,
\begin{equation}
x_t = M\left(w_t\right).
\end{equation}

By the regret guarantees of the projected gradient descent algorithm for the adversary, we have that:
\begin{equation}
\frac{1}{T}\sum_{t=1}^T L(x_t,w_t) \geq \max_{w\in \W} \frac{1}{T}\sum_{t=1}^T L(x_t,w) - \epsilon(T)
\end{equation}
for $\epsilon(T) = \max_{w\in\W}\|w\|_2 \sqrt{\frac{2}{T}}$. Combining the above with the guarantee of the distributional oracle we have
\begin{align*}
\tau = \min_{x\in \X}\max_{w\in \W}  L(x,w) \geq~& \min_{x\in X}\frac{1}{T}\sum_{t=1}^T L(x,w_t)\\
\geq~& \frac{1}{T}\sum_{t=1}^T \min_{x\in \X} L(x,w_t)\\
\geq~& \frac{1}{T}\sum_{t=1}^T \frac{1}{\alpha}\cdot L(x_t,w_t) \tag{By oracle guarantee for each $t$}\\
\geq~& \frac{1}{\alpha}\cdot \left(\max_{w\in \W} \frac{1}{T}\sum_{t=1}^T L(x_t,w) - \epsilon(T)\right).
\tag{By no-regret of adversary}
\end{align*}
Thus if we define with $\Po$ to be the uniform distribution over $\{x_1,\ldots,x_T\}$, then we have derived that
\begin{equation}
\max_{w\in \W} \E_{x\sim \Po}\left[L(x,w) \right] \leq \alpha\tau + \epsilon(T)
\end{equation}
as required.
\end{proof}

A corollary of Theorem~\ref{thm:distributionally-robust} is that if the solution space $\X$ is convex and the function $L(x,y)$ is also convex in $x$ for every $y$, then we can compute a single solution $x^*$ that is approximately minimax optimal. 
\begin{corollary}\label{cor:convex-infinite}
If the space $\X$ is a convex space and the function $L(x,y)$ is convex in $x$ for any $y$, then the point $x^* = \frac{1}{T}\sum_{t=1}^T x_t \in \X$, where $\{x_1,\ldots,x_T\}$ are the output of Algorithm \ref{alg:main-infinite}, satisfies:
\begin{equation}
\max_{w\in \W} L(x^*,w) \leq \alpha \tau +  \max_{w\in \W}\|w\|_2 \sqrt{\frac{2}{T}}
\end{equation}
\end{corollary}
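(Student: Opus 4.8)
The plan is to mimic the proof of Corollary~\ref{cor:convex} almost verbatim, substituting the infinite-loss-set machinery for the finite one. First I would invoke Theorem~\ref{thm:distributionally-robust-infinite} with the stated step size $\eta = \frac{\max_{w\in\W}\|w\|_2}{\sqrt{2T}}$: letting $\Po$ denote the uniform distribution over the iterates $\{x_1,\ldots,x_T\}$ produced by Algorithm~\ref{alg:main-infinite}, this gives
\begin{equation*}
\max_{w\in \W} \E_{x\sim \Po}\left[L(x,w)\right] \leq \alpha \tau + \max_{w\in \W}\|w\|_2 \sqrt{\tfrac{2}{T}}.
\end{equation*}

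Next I would use convexity of $\X$ to observe that the average iterate $x^* = \frac{1}{T}\sum_{t=1}^T x_t = \E_{x\sim\Po}[x]$ is itself a feasible point of $\X$, so the left-hand quantity $\max_{w\in\W} L(x^*,w)$ is well defined. Then, since for every fixed $w\in\W$ the map $x\mapsto L(x,w)$ is convex, Jensen's inequality yields $L(x^*,w) = L(\E_{x\sim\Po}[x],w) \le \E_{x\sim\Po}[L(x,w)]$ for each $w$. Taking the maximum over $w\in\W$ on both sides and chaining with the displayed bound from Theorem~\ref{thm:distributionally-robust-infinite} gives
\begin{equation*}
\max_{w\in\W} L(x^*,w) \le \max_{w\in\W}\E_{x\sim\Po}[L(x,w)] \le \alpha\tau + \max_{w\in\W}\|w\|_2\sqrt{\tfrac{2}{T}},
\end{equation*}
which is exactly the claimed inequality.

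I do not anticipate a genuine obstacle here; the argument is a one-line application of Jensen on top of the already-proved distributional guarantee. The only points that merit a sentence of care are (i) confirming that the hypotheses of Theorem~\ref{thm:distributionally-robust-infinite} (concavity of $L$ in $w$, $1$-Lipschitzness in $w$, convexity of $\W$) are still in force, since Corollary~\ref{cor:convex-infinite} only \emph{adds} the convexity-in-$x$ assumption, and (ii) noting that $\max_{w\in\W}$ is used consistently (the maximum is attained under the standing compactness/continuity assumptions on $\W$ and $L$, exactly as in the theorem statement it is quoted from). Everything else is routine and needs no new computation.
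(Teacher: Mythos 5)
Your proposal is correct and follows essentially the same route as the paper's own proof: invoke Theorem~\ref{thm:distributionally-robust-infinite} for the uniform distribution over the iterates, use convexity of $\X$ to place $x^* = \E_{x\sim\Po}[x]$ in $\X$, and apply Jensen's inequality in $x$ before taking the maximum over $w\in\W$. No gaps.
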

\begin{proof}
By Theorem \ref{thm:distributionally-robust-infinite}, we get that if $\Po$ is the uniform distribution over $\{x_1,\ldots,x_T\}$ then
\begin{equation*}
\max_{w\in \W} \E_{x\sim \Po}[L(x,w)] \leq \alpha \tau + \max_{w\in \W}\|w\|_2 \sqrt{\frac{2}{T}}.
\end{equation*}
Since $\X$ is convex, the solution $x^* = \E_{x\sim \Po}[x]$ is also part of $\X$. Moreover, since each $L(x,y)$ is convex in $x$, we have that $\E_{x\sim \Po}[L(x,y)]\geq L(\E_{x\sim \Po}[x],y) = L(x^*,y)$. We therefore conclude
\begin{equation*}
\max_{w\in \W} L(x^*,w) \leq \max_{w\in \W} \E_{x\sim \Po}[L(x,w)] \leq \alpha \tau + \max_{w\in \W}\|w\|_2 \sqrt{\frac{2}{T}}
\end{equation*}
as required.
\end{proof}

Our results for improper statistical learning can also be analogously generalized to this more general loss uncertainty.

\section{NP-Hardness of Proper Robust Optimization}\label{app:NP-hard}

The convexity assumption of Corollary~\ref{cor:convex} is necessary.  In general, achieving any non-trivial ex-post robust solution is computationally infeasible, even when there are only polynomially many loss functions and they are all concave.

\begin{theorem}
\label{thm:hardness.loss}
There exists a constant $c$ for which the following problem is NP-hard.  Given a collection of linear loss functions $\Ell = \{\ell_1, \dotsc, \ell_m\}$ over a ground set $N$ of $d$ elements, and an optimal distributional oracle over feasibility set $\X = \{ S \subset N \colon |S| = k \}$, find a solution $x^* \in \X$ such that 
\[ \max_{\ell \in \Ell} \ell(x^*) \leq \tau + \frac{k}{m}. \]
\end{theorem}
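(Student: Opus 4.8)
The plan is to reduce from $k$-\textsc{Independent Set} (equivalently $k$-\textsc{Clique}), which is NP-hard even for $k \le |V|$. Given a graph $G=(V,E)$ with $|E|\ge 1$ and an integer $k\le|V|$, I would take $N=V$ (so $d=|V|$), fix an integer $c$ large enough that $m:=c|E|>2k$ (e.g.\ $c=2|V|+1$ works, since then $m\ge 2|V|+1>2k$), and for every edge $e\in E$ include $c$ copies of the modular loss function $\ell_e(S)=|S\cap e|/2$, with single-element values $\ell_e(\{v\})=\tfrac12\mathbf{1}[v\in e]$. Each $\ell_e$ is linear (hence concave), takes values in $\{0,\tfrac12,1\}\subseteq[0,1]$ on $\X=\{S\subseteq N:|S|=k\}$, and $m=\poly(|V|)$; duplicates are harmless (or can be made formally distinct by a negligible perturbation). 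The key structural observations are: $g(S)=\max_e\ell_e(S)=0$ iff $S$ is an independent set; $\tau=0$ iff $G$ has an independent set of size $k$; and otherwise every size-$k$ set contains both endpoints of some edge, so $g(S)=1$ for all $S\in\X$ and $\tau=1$.

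Next I would argue the promised distributional oracle can be supplied explicitly in polynomial time, so that no hardness is smuggled into it. For a distribution $w$ over the $m$ loss functions, writing $\bar w_e$ for the total weight on the copies of edge $e$, linearity gives $\sum_i w_i\,\ell_i(S)=\sum_{v\in S}\tfrac12\sum_{e\ni v}\bar w_e$, a modular function of $S$; it is minimized over $\X$ exactly by taking the $k$ vertices of smallest weighted degree $\sum_{e\ni v}\bar w_e$, computable in time $\poly(|V|,m)$. Feeding this oracle together with $\Ell$ to a hypothetical polynomial-time algorithm $\mathcal{A}$ for the ex-post robust problem, $\mathcal{A}$ returns $S^*\in\X$ with $g(S^*)\le\tau+\tfrac{k}{m}$. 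If $\tau=0$ then $g(S^*)\le \tfrac{k}{m}<\tfrac12$, and since each $\ell_e(S^*)\in\{0,\tfrac12,1\}$ this forces $g(S^*)=0$, i.e.\ $S^*$ is an independent set of size exactly $k$. If $\tau=1$ the guarantee is vacuous and $S^*$ is arbitrary, but then $G$ has no independent set of size $k$ at all, so $S^*$ cannot be one. Hence running $\mathcal{A}$ and then testing in linear time whether $S^*$ is an independent set of size $k$ in $G$ decides $k$-\textsc{Independent Set} in polynomial time, so the ex-post robust problem is NP-hard.

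The part I expect to be the main obstacle to state precisely is the interaction between the additive slack $\tfrac{k}{m}$ and the "resolution'' of the loss values: the reduction must guarantee the slack is strictly below the smallest positive value any $\ell_e$ attains (here $\tfrac12$), which is exactly why the loss functions are duplicated $c$ times — this inflates $m$ while leaving $\tau$, the set $\X$, and the oracle unchanged. Everything else is routine: linearity, the $[0,1]$ bound, and the one-line greedy argument for exact polynomial-time solvability of the distributional oracle on modular objectives over size-$k$ sets. Minor variants of the statement — replacing $\tfrac{k}{m}$ by an absolute constant, or using additive rather than multiplicative oracle error — follow from the same template with cosmetic changes.
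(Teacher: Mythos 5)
Your overall plan is sound and genuinely different from the paper's (the paper reduces from set packing, using dummy items $a_{ij}$ so that the optimal oracle just returns the $k$ items tied to the least-probable loss, and shows it is NP-hard to distinguish $\tau < 3/m$ from $\tau > 4/m$; you reduce from $k$-independent set and get the oracle for free from modularity). But as written your structural claims are wrong, and the error sits exactly where the approximation guarantee is invoked. With $\ell_e(S)=|S\cap e|/2$ one has $\ell_e(S)=0$ iff $S\cap e=\emptyset$, so $g(S)=0$ iff \emph{no} edge touches $S$, not iff $S$ is independent; an independent set whose vertices have any incident edges has $g(S)=1/2$. Consequently the dichotomy ``$\tau=0$ iff $G$ has a size-$k$ independent set, else $\tau=1$'' is false: in the generic yes-instance $\tau=1/2$, a case your analysis never treats, and your step ``$\tau=0$ forces $g(x^*)\le k/m<1/2$ hence $g(x^*)=0$'' rests on a premise that typically fails.

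The reduction is salvageable with a small patch, because the quantity that actually characterizes independence is $g(S)\le 1/2$ (no edge fully inside $S$) versus $g(S)=1$: if a size-$k$ independent set exists then $\tau\le 1/2$, so any $x^*$ with $\max_{\ell\in\Ell}\ell(x^*)\le\tau+\frac{k}{m}<\tau+\frac12\le 1$ must have $g(x^*)\in\{0,\tfrac12\}$ and hence be independent; if none exists, $x^*$ cannot pass the independence test. So the final decision procedure you describe still works once the trichotomy $\tau\in\{0,\tfrac12,1\}$ is stated correctly and the slack is compared against the gap between the levels $1/2$ and $1$ (rather than $0$ and $1/2$); the duplication trick ensuring $m>2k$ is then doing exactly the right job. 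The oracle argument (the mixture objective is modular over vertices, minimized by the $k$ smallest weighted degrees) is correct and plays the same role as the paper's explicit polytime oracle, which is needed so that no hardness is hidden in the oracle assumption.
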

\begin{proof}
We reduce from the set packing problem, in which there is a collection of sets $\{T_1, \dotsc, T_d\}$ over a ground set $\mathcal{U}$ of $m$ elements $\{u_1, \dotsc, u_m\}$, and the goal is to find a collection of $k$ sets that are all pairwise disjoint.  This problem is known to be NP-hard, even if we assume $k < m/4$.

Given an instance of the set packing problem, we define an instance of robust loss minimization as follows.  There is a collection of $m$ linear functions $\Ell = \{\ell_1, \dotsc, \ell_m\}$, and $N$ is a set of $mk+d$ items, say  $\{a_{ij}\}_{i \leq m, j\leq k} \cup \{b_r\}_{r \leq d}$.  The linear functions are given by $\ell_i(a_{ij}) = 1/k$ for all $i$ and $j$, $\ell_i(a_{i'j}) = 0$ for all $i' \neq i$ and all $j$, $\ell_i(b_r) = 2/m$ if $u_i \in T_r$, and $\ell_i(b_r) = 1/km$ if $u_i \not\in T_r$.

We claim that in this setting, an optimal Bayesian oracle can be implemented in polynomial time.  Indeed, let $D$ be any distribution over $\Ell$, and let $\ell_i$ be any function with minimum probability under $D$.  Then the set $S = \{a_{i1}, \dotsc, a_{ik} \}$ minimizes the expected loss under $D$.  This is because the contribution of any given element $a_{ij}$ to the loss is equal to $1/k$ times the probability of $\ell_i$ under $D$, which is at most $1/km$ for the lowest-probability element, whereas the loss due to any element $b_r$ is at least $1/km$.  Thus, since the optimal Bayesian oracle is polytime implementable, it suffices to show NP-hardness without access to such an oracle.

To establish hardness, note that if a set packing exists, then the solution to the robust optimization problem given by $S = \{ b_r \colon T_r \text{ is in the packing } \}$ satisfies $\ell_i(S) \leq 2/m + (k-1)/km < 3/m$.  On the other hand, if a set packing does not exist, then any solution $S$ for the robust optimization problem either contains an element $a_{ij}$ --- in which case $\ell_i(S) \geq 1/k > 4/m$ --- or must contain at least two elements $b_r, b_s$ such that $T_r \cap T_s \neq \emptyset$, which implies there exists some $i$ such that $\ell_i(S) \geq 4/m$.  We can therefore reduce the set packing problem to the problem of determining whether the minimax optimum $\tau$ is greater than $4/m$ or less than $3/m$.  We conclude that it is NP-hard to find any $S^*$ such that $\max_{\ell \in \Ell} \ell(S^*) \leq \tau + 1/m$.
\end{proof}

Similarly, for robust submodular maximization, in order to achieve a non-trivial approximation guarantee it is necessary to either convexify the outcome space (e.g., by returning distributions over solutions) or extend the solution space to allow solutions that are larger by a factor of $\Omega(\log |\F|)$.  This is true even when there are only polynomially many functions to optimize over, and even when they are all linear.

\begin{theorem}
\label{thm:hardness.value}
There exists a constant $c$ for which the following problem is NP-hard.  Given any $\alpha > 0$, and a collection of linear functions $\F = \{f_1, \dotsc, f_m\}$ over a ground set $N$ of $d$ elements, and an optimal distributional oracle over subsets of $N$ of size $k$, find a subset $S^* \subseteq N$ with $|S^*| \leq c k \log(m)$ such that 
\[ \min_{f \in \F} f(S^*) \geq \frac{1}{\alpha} \tau - \frac{1}{\alpha k m}. \]
\end{theorem}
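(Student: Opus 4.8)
The plan is to reduce from the gap version of \textsc{Set Cover}, exploiting its $\Omega(\log m)$ inapproximability. This is the natural counterpart of the set-packing reduction behind Theorem~\ref{thm:hardness.loss}, and it is precisely the obstruction that forces the $\log|\F|$ blow-up in the bi-criterion corollaries of Section~\ref{sec:submodular}. Concretely, I will use the following form of the hardness: there is a constant $c>0$ such that, given a family of sets $\{T_1,\dots,T_d\}$ over a universe $\mathcal U=\{u_1,\dots,u_m\}$, it is NP-hard to decide between the \emph{yes} case, in which some sub-family of size at most $k$ covers $\mathcal U$, and the \emph{no} case, in which every covering sub-family has size greater than $ck\log m$. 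I take $c$ in the theorem to be this constant, and I may assume $\alpha\ge 1$ throughout, since for $\alpha<1$ the target $\frac1\alpha\tau$ can exceed the optimum $\tau$ and the statement is then vacuous or trivially handled.

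From a \textsc{Set Cover} instance I build the robust maximization instance with ground set $N=\{T_1,\dots,T_d\}$ (so $|N|=d$), feasibility set all $k$-subsets of $N$, and, for each $i\in[m]$, the modular (hence submodular and linear) function $f_i(S)=|\{T_r\in S:u_i\in T_r\}|$, i.e.\ $f_i(T_r)=\mathbf 1[u_i\in T_r]$ extended linearly (one may rescale to keep values bounded; this changes nothing). The first point to record is that the optimal Bayesian oracle is trivially polynomial time here: for a distribution $D$ over $\F$ one has $\sum_i D(i)f_i(S)=\sum_{T_r\in S}\big(\sum_{i:u_i\in T_r}D(i)\big)$, so the oracle just returns the $k$ elements $T_r$ of largest weight $\sum_{i:u_i\in T_r}D(i)$. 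Hence oracle access really is ``for free'' and the hardness is not hiding any computation in the oracle; since the $f_i$ are linear this also yields the strongest form of the statement.

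Next I would pin down $\tau$ in the two cases. In the \emph{yes} case a size-$k$ covering sub-family $T$ witnesses $\min_i f_i(T)\ge1$, so $\tau=\max_{|T|\le k}\min_i f_i(T)\ge1$; in the \emph{no} case every $k$-subset misses some $u_i$, so $\min_i f_i(T)=0$ and hence $\tau=0$. Thus the target $\frac1\alpha\tau-\frac1{\alpha km}$ is at least $\frac1\alpha\big(1-\frac1{km}\big)>0$ in the \emph{yes} case and equals $-\frac1{\alpha km}<0$ in the \emph{no} case. The reduction is then driven by two observations. First, in the \emph{yes} case a valid $S^*$ always exists --- the $\tau$-achieving $k$-subset has min-value $\tau\ge\frac1\alpha\tau-\frac1{\alpha km}$ for every $\alpha\ge1$ and has size $k\le ck\log m$ --- and moreover \emph{any} valid $S^*$ satisfies $f_i(S^*)\ge\frac1\alpha\big(1-\frac1{km}\big)>0$ for all $i$, which, since $f_i$ is integer-valued, forces $f_i(S^*)\ge1$ for all $i$, i.e.\ $S^*$ is itself a set cover of size at most $ck\log m$. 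Second, in the \emph{no} case no $k'$-subset with $k'\le ck\log m$ is a set cover, so every such set has $\min_i f_i=0$ (even though the negative target makes every set ``valid''). Hence a polynomial-time algorithm for the stated problem decides the \textsc{Set Cover} promise problem: run it on the constructed instance and answer \emph{yes} iff the returned $S^*$ has $\min_i f_i(S^*)>0$ (checkable in $O(md)$ time). By the assumed hardness this is impossible unless $\mathrm P=\mathrm{NP}$.

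The reduction itself is short, so the work is in two places. The only nontrivial external ingredient is importing the correct \emph{gap} form of \textsc{Set Cover} hardness --- that in the \emph{no} case \emph{every} cover, not merely every size-$k$ cover, has size exceeding $ck\log m$, which is exactly the $(1-o(1))\ln m$-inapproximability. The main conceptual obstacle, which I would flag explicitly, is getting the ``for any $\alpha>0$'' quantifier right: the same instance works for all $\alpha\ge1$ precisely because the target $\frac1\alpha\tau-\frac1{\alpha km}$ never exceeds $\tau$, so it is always met by the optimal $k$-set, and therefore the hardness comes entirely from the difficulty of \emph{finding} a good set rather than from any gap in achievable values. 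Consequently the distinguisher must be ``does the returned set have positive min-value?'' and not ``does a valid set exist?'' --- recognizing this is the crux of the argument.
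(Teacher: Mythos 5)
Your reduction is correct and comes from the same source as the paper's: gap \textsc{Set Cover} (cover of size $k$ versus no cover of size $ck\log m$), together with the observation that the optimal Bayesian oracle is polynomial-time implementable so that oracle access adds no power, and the distinguishing step ``a valid output in the yes-case must itself be a cover of size at most $ck\log m$, which cannot exist in the no-case.'' The difference is in the gadget. The paper pads the ground set with $km$ dummy elements $a_{ij}$ (with $f_i(a_{ij})=1/k$, $f_i(b_r)=1/km$ for $u_i\in T_r$) precisely so that the optimal oracle has a trivial closed-form answer; you instead take $N=\{T_1,\dots,T_d\}$ directly and note that for linear objectives the optimal oracle is just top-$k$ selection by weight, which is simpler and equally legitimate. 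Your normalization also buys something concrete: with $f_i$ counting (or counting divided by $k$), the yes-case threshold $\frac1\alpha\tau-\frac1{\alpha km}$ is strictly positive, so any valid output is forced to be a cover; in the paper's construction $\tau$ can equal exactly $1/km$ in the yes-case, making the required bound only $\ge 0$ and leaving the final ``distinguish $\tau\ge 1/km$ from $\tau=0$'' step terser than one would like. In that sense your write-up makes the crux of the argument more explicit.

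Two caveats you should tighten. First, the aside ``one may rescale to keep values bounded; this changes nothing'' is not innocuous: the additive slack $\frac{1}{\alpha km}$ is fixed in absolute terms, so if you normalize by $d$ (or by the maximum value over sets of size $ck\log m$) the yes-case target $\frac1\alpha\tau-\frac1{\alpha km}$ can become nonpositive when $d\gtrsim km$ and the whole distinguishing argument collapses; dividing by $k$ (so that values on feasible size-$k$ sets lie in $[0,1]$ and $\tau\ge 1/k>1/(km)$) is the right choice, and positivity of $f_i(S^*)$ then suffices without invoking integrality. Second, the treatment of $\alpha<1$ is hand-wavy: for $\alpha<1$ the search problem can be infeasible even in the yes-case, in which case a correct algorithm may legitimately fail and your distinguisher loses its guarantee; this regime is equally glossed over in the paper (whose proof claims ``for any positive $\alpha$''), so it is fair to restrict attention to $\alpha\ge 1$, but say so as a restriction rather than a triviality.
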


\begin{proof}
We reduce from the set cover problem, in which there is a collection of sets $\{T_1, \dotsc, T_d\}$ over a ground set $\mathcal{U}$ of $m$ elements $\{u_1, \dotsc, u_m\}$, whose union is $\mathcal{U}$, and the goal is to find a collection of at most $k$ sets whose union is $\mathcal{U}$.  There exists a constant $c$ such that it is NP-hard to distinguish between the case where such a collection exists, and no collection of size at most $c k \log(n)$ exists.  

Given an instance of the set cover problem, we define an instance of the robust linear maximization problem as follows.  There is a collection of $m$ linear functions $\F = \{f_1, \dotsc, f_m\}$, and $N$ is a set of $k m + d$ items, say $\{a_{ij}\}_{i \leq m, j \leq k} \cup \{b_r\}_{r \leq d}$.  For each $i \leq m$ and $j \leq k$, set $f_i(a_{ij}) = 1/k$ and $f_i(a_{i'j}) = 0$ for all $i' \neq i$.  For each $i \leq m$ and $r \leq d$, set $f_i(b_r) = 1/km$ if $u_i \in T_r$ in our instance of the set cover problem, and $f_i(b_r) = 0$ otherwise.

We claim that in this setting, an optimal Bayesian oracle can be implemented in polynomial time.  Indeed, let $D$ be any distribution over $\F$, and suppose $f_i$ is any function with maximum probability under $D$.  Then the set $S = \{a_{i1}, \dotsc, a_{ik} \}$ maximizes expected value under $D$.  This is because the value of any given element $a_{ij}$ is at least $1/k$ times the probability of $f_i$ under $D$, which is at least $1/m$, whereas the value of any element $b_r$ is at most $1/km$.  Thus, since the optimal Bayesian oracle is polytime implementable, it suffices to show NP-hardness without access to such an oracle.

To establish hardness, note first that if a solution to the set cover problem exists, then the solution to the robust optimization problem given by $S = \{b_r \colon T_r \text{ is in the cover } \}$ satisfies $f_i(S) \geq 1/km$ for all $i$.  On the other hand, if no set cover of size $k$ exists, then for any solution $S$ to the robust optimization problem there must exist some element $u_i$ such that $u_i \neq T_r$ for every $b_r \in S$, and such that $a_{ij} \neq S$ for all $j$. This implies that $f_i(S) = 0$, and hence $\tau = 0$.
%
We have therefore reduced the set cover problem to distinguishing cases where $\tau \geq 1/km$ from cases where $\tau = 0$.  We conclude that it is NP-hard to find any $S^*$ for which $\min_{f \in F} f(S^*) \geq \frac{1}{\alpha}(\tau - \frac{1}{km})$, for any positive $\alpha$.
\end{proof}


\section{Strengthening the Benchmark}
We now observe that our construction actually competes with a stronger benchmark than $\tau$. In particular, one that allows for distributions over solutions: 
\begin{equation}
\tau^* = \min_{\G\in \Delta(\X)}\max_{i\in [m]} \E_{x\sim \G}[L_i(x)]
\end{equation}
Hence, our assumption is that there exists a distribution $\G$ over solutions $\X$ such that for any realization of the objective function, the expected value of the objective under this distribution over solutions is at least $\tau^*$. 

Now we ask: given an oracle for the distributional problem, can we find a solution for the robust problem that achieve minimum reward at least $\tau^*$. We show that this is possible:
\begin{theorem}\label{thm:distributionally-robust-2}
Given access to an $\alpha$-approximate Bayesian oracle, we can compute a distribution $\Po$ over solutions, defined as a uniform distribution over a set $\{x_1,\ldots,x_T\}$, such that:
\begin{equation}
\max_{i\in [m]} \E_{x\sim \Po}\left[L_i(x)\right] \leq \alpha \tau^*+ \sqrt{\frac{2\log(m)}{T}}
\end{equation}
\end{theorem}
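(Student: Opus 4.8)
The plan is to observe that the proof of Theorem~\ref{thm:distributionally-robust} already yields this stronger conclusion essentially unchanged: the minimax optimum $\tau$ is used there in exactly one place, namely the inequality $\tau = \min_{x\in\X}\max_{i\in[m]}L_i(x) \geq \min_{x\in\X}\frac{1}{T}\sum_{t=1}^T\E_{I\sim\vec{w}_t}[L_I(x)]$, and everything downstream of it only uses the right-hand side. So I would run Algorithm~\ref{alg:main} with $\eta=\sqrt{\log(m)/(2T)}$ exactly as before, obtaining the adversary's multiplicative-weights distributions $\vec{w}_1,\ldots,\vec{w}_T$ over $[m]$ and the iterates $x_t = M(\vec{w}_t)$, and reuse verbatim the two ingredients of that proof: the adversary's no-regret guarantee $\frac{1}{T}\sum_t\E_{I\sim\vec{w}_t}[L_I(x_t)] \geq \max_i\frac{1}{T}\sum_t L_i(x_t) - \sqrt{2\log(m)/T}$, and the oracle guarantee $\E_{I\sim\vec{w}_t}[L_I(x_t)]\leq \alpha\min_{x\in\X}\E_{I\sim\vec{w}_t}[L_I(x)]$ for each $t$.

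The single new step is to certify $\tau^* \geq \min_{x\in\X}\frac{1}{T}\sum_{t=1}^T\E_{I\sim\vec{w}_t}[L_I(x)]$. For this I set $\bar{\vec{w}} = \frac{1}{T}\sum_{t=1}^T\vec{w}_t$, which is again a probability distribution on $[m]$ (a convex combination of such), and note that for every $\G\in\Delta(\X)$,
\begin{align*}
\max_{i\in[m]}\E_{x\sim\G}[L_i(x)] &\geq \sum_{i\in[m]}\bar{\vec{w}}[i]\,\E_{x\sim\G}[L_i(x)] = \E_{x\sim\G}\!\left[\tfrac{1}{T}\sum_{t=1}^T\E_{I\sim\vec{w}_t}[L_I(x)]\right]\\
&\geq \min_{x\in\X}\tfrac{1}{T}\sum_{t=1}^T\E_{I\sim\vec{w}_t}[L_I(x)].
\end{align*}
Taking the minimum over $\G\in\Delta(\X)$ on the left-hand side gives the claim. (Taking $\G$ a point mass shows $\tau^*\leq\tau$, so this is genuinely a strengthening of Theorem~\ref{thm:distributionally-robust}.)

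Substituting this inequality for $\tau\geq\min_{x\in\X}\frac{1}{T}\sum_t\E_{I\sim\vec{w}_t}[L_I(x)]$ and then following the same chain as in Theorem~\ref{thm:distributionally-robust} — exchange $\min_{x\in\X}$ with $\frac{1}{T}\sum_t$, apply the oracle guarantee termwise, apply the adversary's no-regret bound — gives $\tau^* \geq \frac{1}{\alpha}\bigl(\max_i\frac{1}{T}\sum_t L_i(x_t) - \sqrt{2\log(m)/T}\bigr)$, equivalently $\max_i\E_{x\sim\Po}[L_i(x)]\leq\alpha\tau^*+\sqrt{2\log(m)/T}$ for $\Po$ the uniform distribution over $\{x_1,\ldots,x_T\}$, as required.

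I do not expect a real obstacle: the whole content is the structural observation that $\tau$ entered the earlier argument only through a one-sided bound that the time-averaged adversary distribution $\bar{\vec{w}}$ already witnesses against the stronger benchmark $\tau^*$. The only point needing a moment's care is that $\bar{\vec{w}}$ is a legitimate distribution and that $\max_i\E_{x\sim\G}[L_i(x)]\geq\sum_i\bar{\vec{w}}[i]\E_{x\sim\G}[L_i(x)]$ for each fixed $\G$ — both immediate — so that no appeal to the minimax theorem or to compactness/finiteness of $\X$ is needed.
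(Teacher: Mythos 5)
Your proof is correct, but it takes a different route from the paper's. The paper proves this theorem by a black-box reduction: it lifts the solution space from $\X$ to $\Delta(\X)$, defines lifted losses $L_j'(D)=\E_{x\sim D}[L_j(x)]$, observes that the given oracle for $\X$ is automatically an $\alpha$-approximate Bayesian oracle for the lifted problem (since the Bayesian objective over $\Delta(\X)$ is minimized at a point mass) and that $\tau^*$ is exactly the $\tau$ of the lifted problem, then invokes Theorem~\ref{thm:distributionally-robust} verbatim and notes that a distribution over distributions is again a distribution over $\X$. You instead open up the proof of Theorem~\ref{thm:distributionally-robust} and strengthen the single inequality where $\tau$ appears, showing via the time-averaged adversary distribution $\bar{\vec{w}}=\frac{1}{T}\sum_t \vec{w}_t$ that
\begin{equation*}
\tau^* \;\geq\; \min_{x\in\X}\frac{1}{T}\sum_{t=1}^T \E_{I\sim \vec{w}_t}\left[L_I(x)\right],
\end{equation*}
after which the oracle and no-regret steps go through unchanged; this step is valid (max dominates any convex combination, and expectation over $\G$ dominates the min over $\X$). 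The paper's reduction buys brevity and reusability — e.g., the faster-convergence bound of Theorem~\ref{thm:faster} against $\tau^*$ follows for free by the same lifting — while your white-box argument makes the mechanism explicit (the adversary's averaged distribution itself certifies a lower bound against the stronger randomized benchmark) and avoids having to verify the lifting details; neither approach needs a minimax theorem.
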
 
\begin{proof}
Observe that a distributional oracle for the setting with solution space $\X$ and functions $\Ell=\{L_1,\ldots,L_m\}$ is also a distributional oracle for the setting with solution space $\D=\Delta(\X)$ and functions $\Ell'= \{L_1',\ldots,L_m'\}$, where for any $D\in \D$: $L_j'(D) = \E_{x\sim D}\left[L_j(x)\right]$. Moreover, observe that $\tau^*$ is exactly equal to $\tau$ for the setting with solution space $\D$ and function space $\Ell'$. Thus applying Theorem \ref{thm:distributionally-robust} to that setting we get an algorithm which computes a distribution $\Po'$ over distributions of solutions in $\X$, that satisfies:
\begin{equation}
\max_{i\in [m]} \E_{D\sim \Po'}\left[\E_{x\sim D}[L_j(x)]\right] \leq \alpha \tau^*+\sqrt{\frac{2\log(m)}{T}}
\end{equation} 
Observe that a distribution over distributions of solutions is simply a distribution over solutions, which concludes the proof of the Theorem.
\end{proof}


\section{Experiments}\label{sec:appendix-neural-nets}

\subsection{Hybrid Method}

In order to apply the robust optimization algorithm we need to construct a neural network architecture that facilitates it. In each iteration $t$, such an architecture receives a distribution over corruption types $\textbf{w}_t = [\textbf{w}_t[1],...,\textbf{w}_t[m]]$ and produces a set of weights $\theta_t$.

\begin{figure}[H]
 \begin{center}
    \includegraphics[scale = .25,angle=0,origin=c]{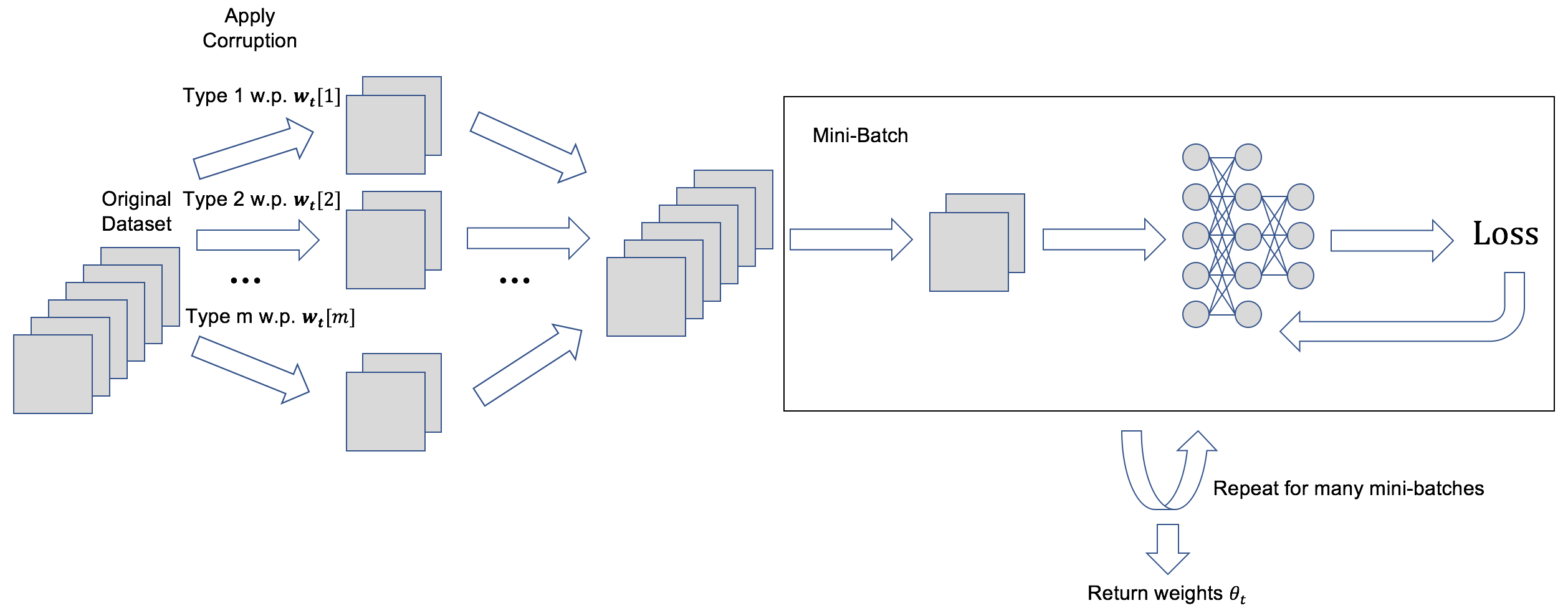}
    \end{center}
\caption{First interpretation of Bayesian oracle, training on a sample of images drawn from the mixture of corruptions.}\label{fig:hybrid}   
\end{figure}

In the Hybrid Method, our first oracle, we take each training data image and perturb it by exactly one corruption, with corruption $i$ being selected with probability $\textbf{w}_t[i]$. Then apply mini-batch gradient descent, picking mini-batches from the perturbed data set, to train a classifier $\theta_t$. Note that the resulting classifier will take into account corruption $i$ more when $\textbf{w}_t[i]$ is larger.

\subsection{Composite Method} 
\begin{figure}[H]
 \begin{center}
    \includegraphics[scale = .2,angle=0,origin=c]{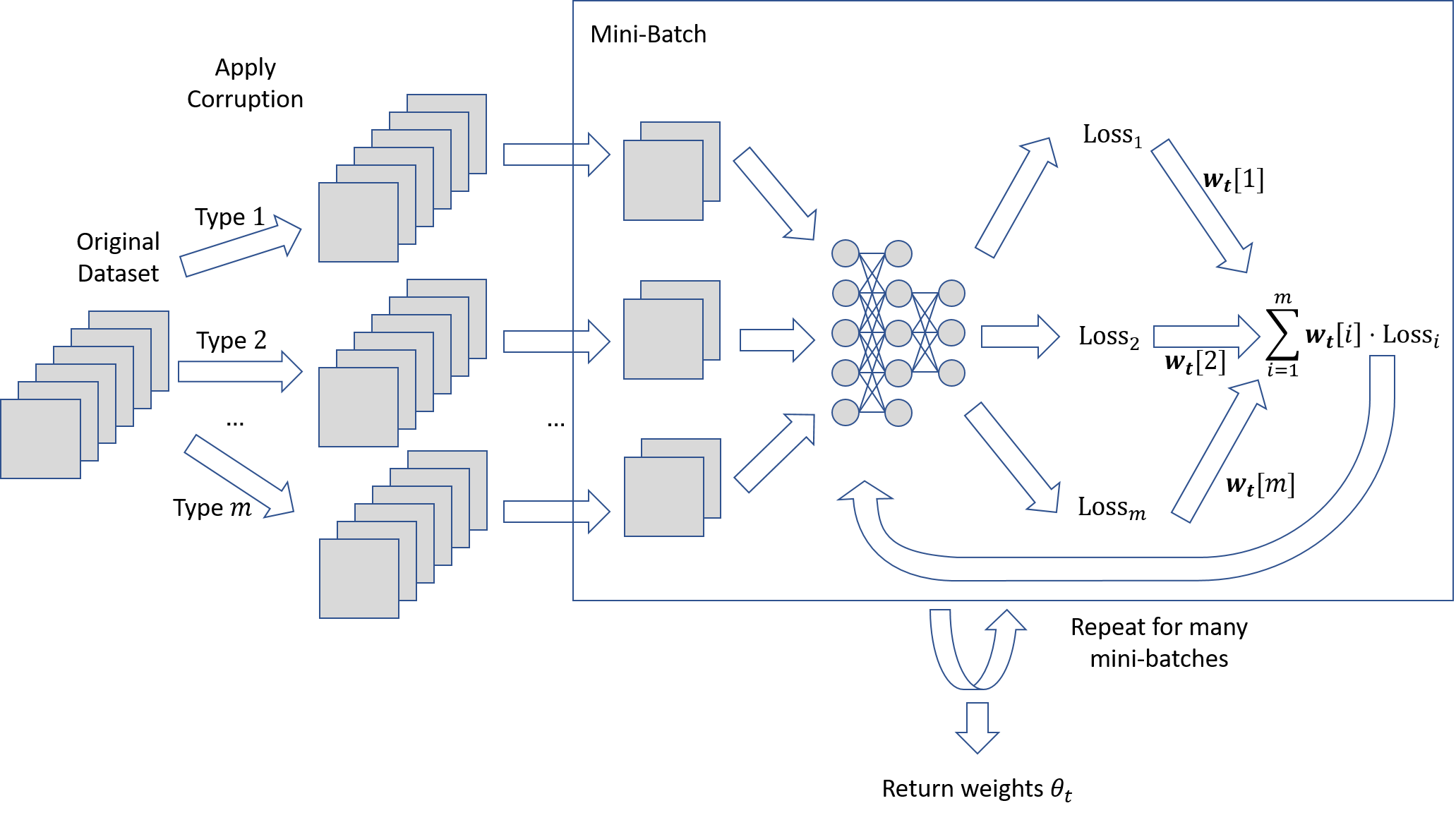}
    \end{center}
    \caption{Second interpretation of Bayesian oracle, by creating $m$ coupled instantiations of the net architecture (one for each corruption type), with the $i$-th instance taking as input the image corrupted with the $i$-th corruption and then defining the loss as the convex combination of the losses from each instance.}\label{fig:composite}
\end{figure}

In the Composite Method, at each iteration, we use $m$ copies of the training data, where copy $i$ has Corruption Type $i$ applied to all training images. The new neural network architecture has $m$ sub-networks, each taking in one of the $m$ training data copies as input. All sub-networks share the same set of neural network weights. During a step of neural network training, a mini-batch is selected from the original training image set, and the corresponding images in each of the $m$ training set copies are used to compute weighted average of the losses $\sum_{i=1}^m \textbf{w}_t[i] Loss_{t,i}$, which is then used to train the weights.

\subsection{Corruption Set Details}

\textbf{Background Corruption Set} consists of images with (i) an unperturbed white background--the original images, (ii) a light gray tint background, (iii) a gradient background, (iv) and a checkerboard background.

\noindent \textbf{Shrink Corruption Set} consists of images with (i) no distortion--the original images, (ii) a $25\%$ shrinkage along the horizontal axis, (iii) a $25\%$ shrinkage along the vertical axis, and (iv) a $25\%$ shrinkage in both axes.

\noindent \textbf{Pixel Corruption Set} consists of images that (i) remain unaltered--the original images, (ii) have $Unif[-0.15,-0.05]$ perturbation added i.i.d. to each pixel, (iii) have $Unif[-0.05,0.05]$ perturbation added i.i.d. to each pixel, and (iv) have $Unif[0.05,0.15]$ perturbation added i.i.d. to each pixel.

\noindent \textbf{Mixed Corruption Set} consists of images that (i) remain unaltered--the original images, and one corruption type from each of the previous three corruption sets (which were selected at random), namely that with (ii) the checkerboard background, (iii) $25\%$ shrinkage in both axes, and (iv) i.i.d. $Unif[-0.15,-0.05]$ perturbation.


\subsection{Neural Network Results}

\begin{table}[H]
\begin{center}
  \begin{tabular}{ | l | c | c | c | c |}
  \hline
     & \textit{Background Set} & \textit{Shrink Set} & \textit{Pixel Set} & \textit{Mixed Set}\\ \hline
    \textit{Best Individual Baseline} & \makecell{\textbf{8.85} \\ \textit{(8.38,9.32)}} & \makecell{\textbf{7.19} \\ \textit{(7.09,7.28)}} & \makecell{\textbf{1.82} \\ \textit{(1.81,1.82)}} & \makecell{\textbf{8.75} \\ \textit{(8.50,9.00)}} \\ \hline
    \textit{Even Split Baseline} & \makecell{\textbf{28.35} \\ \textit{(26.81,29.89)}} & \makecell{\textbf{11.54} \\ \textit{(11.25,11.83)}} & \makecell{\textbf{1.93} \\ \textit{(1.91,1.95)}} & \makecell{\textbf{9.92} \\ \textit{(9.78,10.06)}} \\ \hline
    \textit{Uniform Distribution Baseline} & \makecell{\textbf{2.06} \\ \textit{(2.05,2.08)}} & \makecell{\textbf{1.74} \\ \textit{(1.72,1.76)}} & \makecell{\textbf{1.30} \\ \textit{(1.30,1.31)}} & \makecell{\textbf{1.46} \\ \textit{(1.45,1.47)}} \\ \hline
    \textit{Hybrid Method} & \makecell{\textbf{1.38} \\ \textit{(1.37,1.39)}} & \makecell{\textbf{1.48} \\ \textit{(1.47,1.49)}} & \makecell{\textbf{1.29} \\ \textit{(1.28,1.30)}} & \makecell{\textbf{1.36} \\ \textit{(1.35,1.36)}}\\ \hline
    \textit{Composite Method} & \makecell{\textbf{1.31} \\ \textit{(1.30,1.31)}} & \makecell{\textbf{1.30} \\ \textit{(1.29,1.31)}} & \makecell{\textbf{1.25} \\ \textit{(1.24,1.25)}} & \makecell{\textbf{1.25} \\ \textit{(1.24,1.26)}}\\ \hline
  \end{tabular}
  \end{center}
  \caption{Individual Bottleneck Loss results (mean over $10$ independent runs and a $95\%$ confidence interval for the mean) with $T=50$ on all four Corruption Sets. Composite Method outperforms Hybrid Method, and both outperform baselines, with such differences being statistically significant.} \label{tbl:neuralnetwork}
\end{table}

\subsection{Analysis of Multiplicative Weights Update}

Consider the robust optimization algorithm using the Hybrid and Composite Methods, but parameterizing $\eta$ as $\eta = c \cdot T^{-\gamma}$ (for constant $c = \sqrt{\frac{\log{m}}{2}}$) to alter the multiplicative weights update formula. In this paper, we have been using $\gamma = 0.5 \implies \eta = \frac{c}{\sqrt{T}}$. Lower values of $\gamma$ leads to larger changes in the distribution over corruption types between robust optimization iterations. Here we rerun our experiments from Section \ref{sec:neural-nets} using $\gamma = 0.1$; we did not tune $\gamma$--the only values of $\gamma$ tested were $0.1$ and $0.5$.\footnote{A possible future step would be to use cross-validation to tune $\gamma$ or design an adaptive parameter algorithm for $\gamma$.}

\begin{figure}[H]
 \begin{center}
    \includegraphics[scale = .4,angle=0,origin=c]{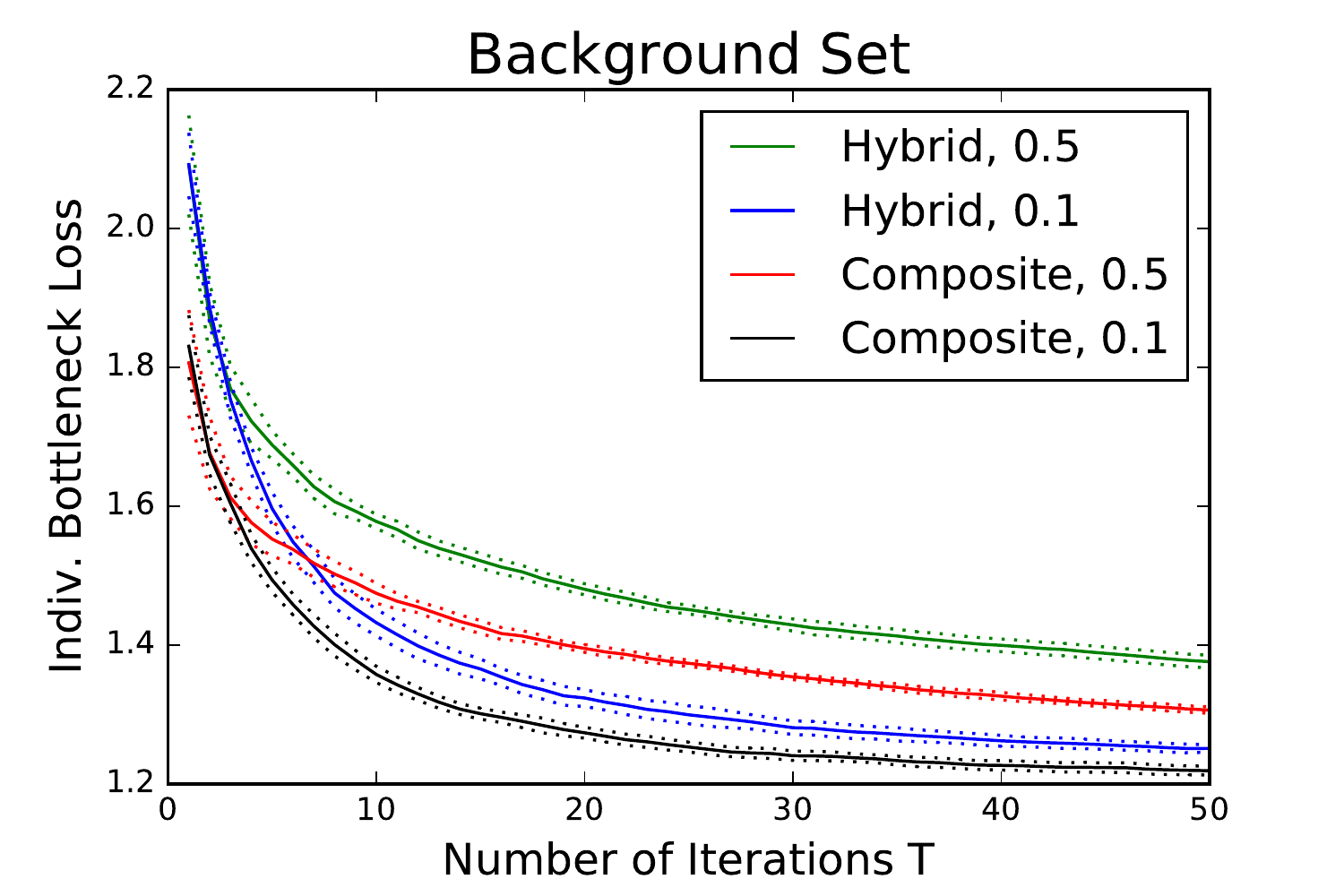}
    \includegraphics[scale = .4,angle=0,origin=c]{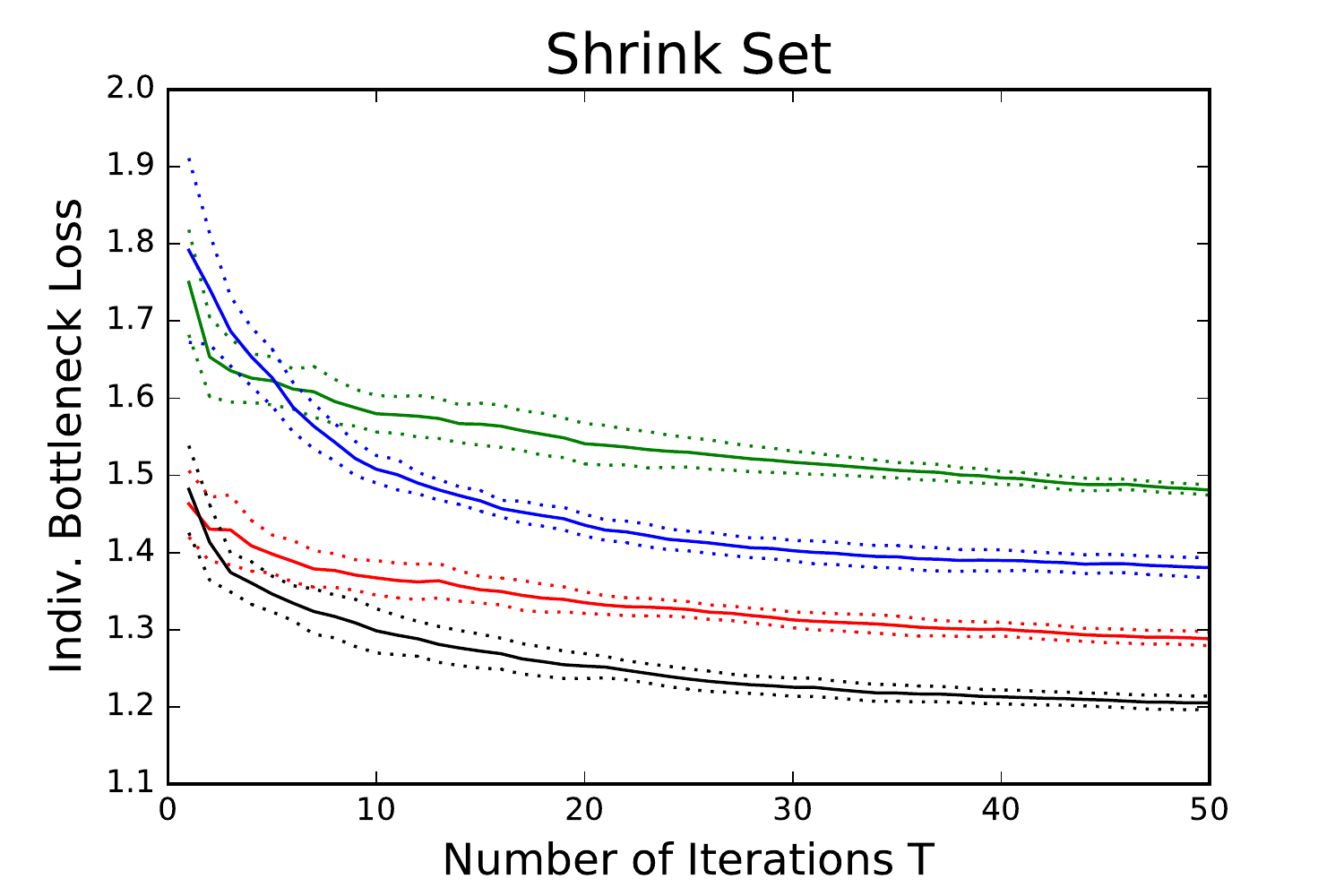}
    \includegraphics[scale = .4,angle=0,origin=c]{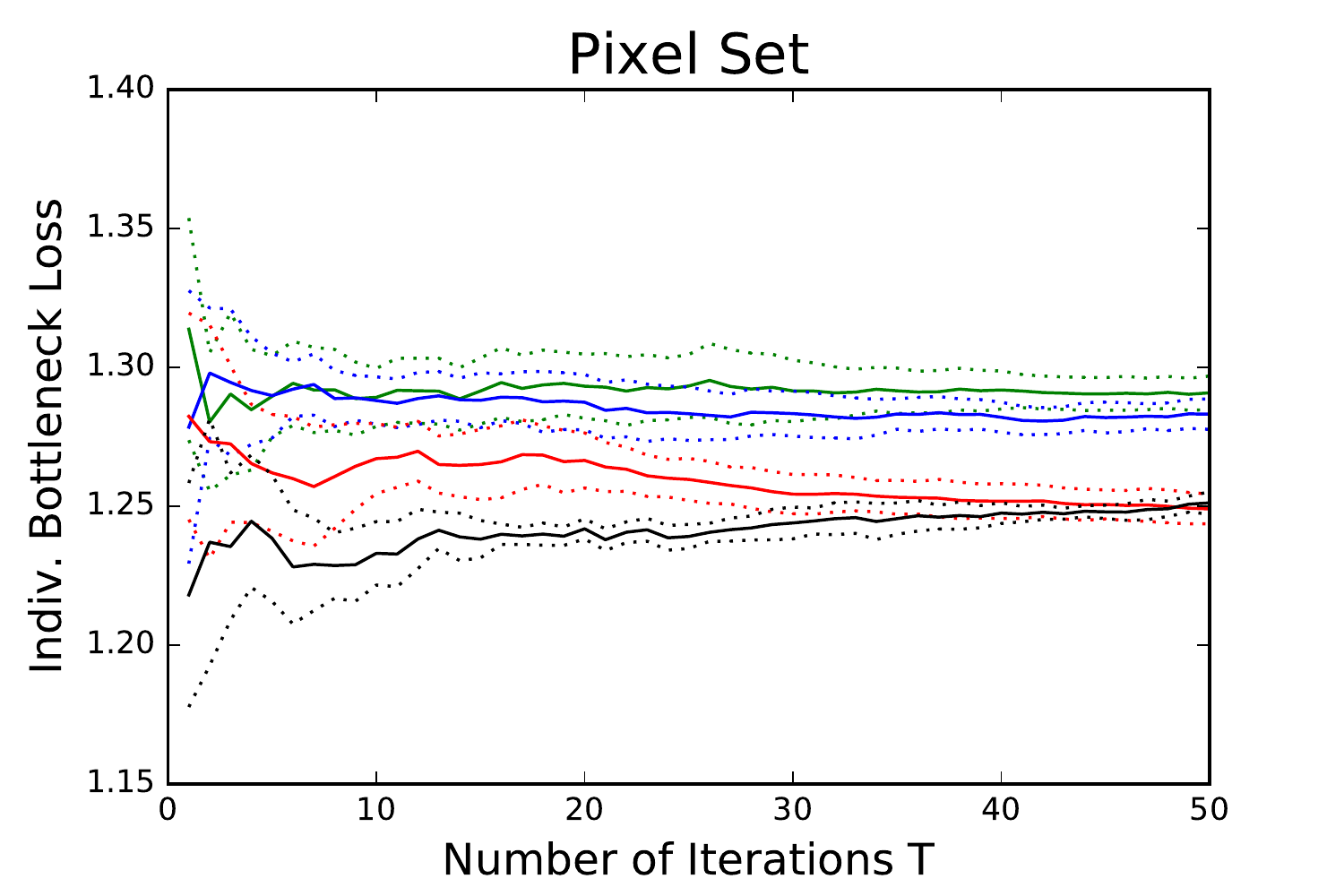}
    \includegraphics[scale = .4,angle=0,origin=c]{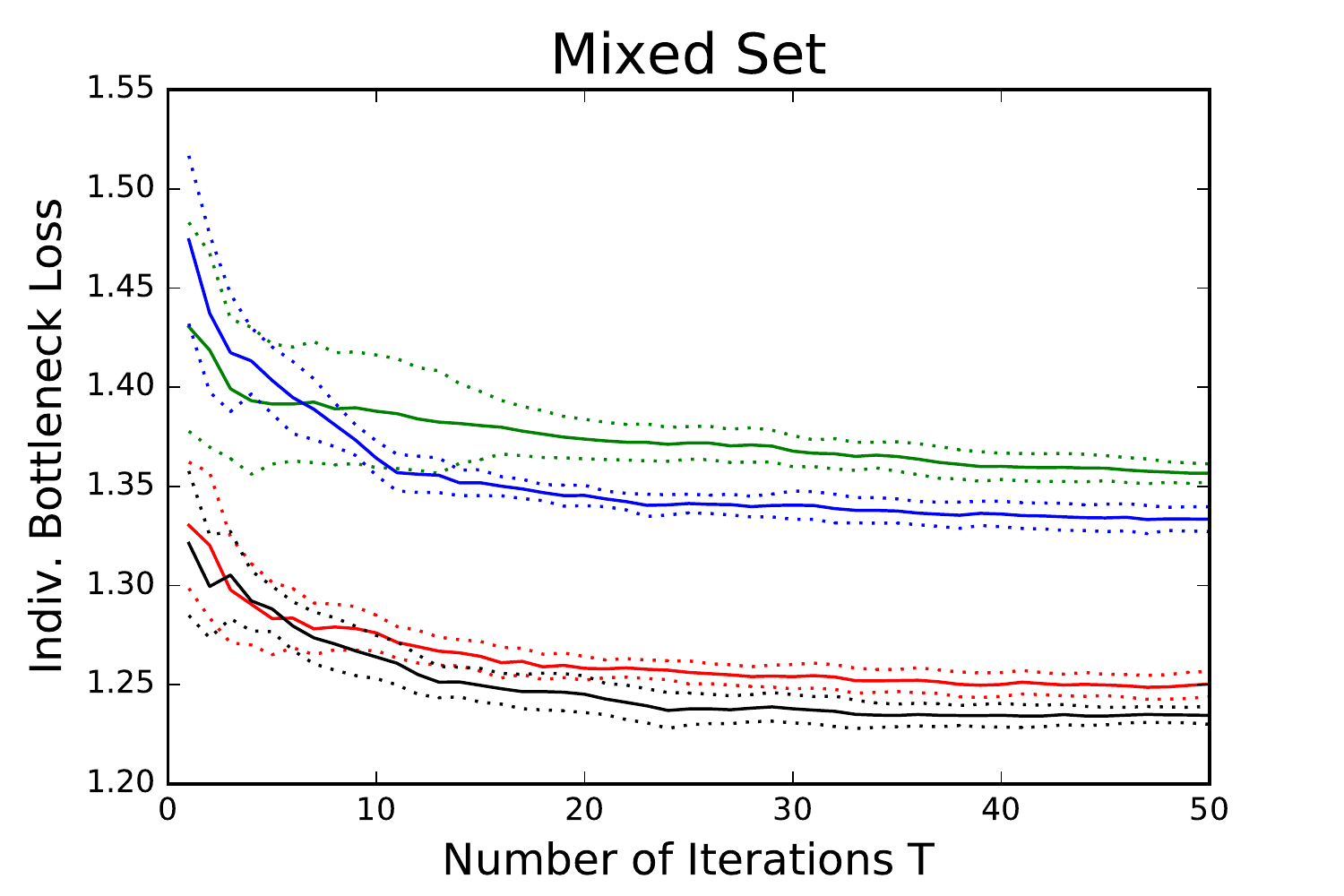}
    \end{center}
    \caption{Comparison of Individual Bottleneck Loss between using $\gamma = 0.5$ vs. $\gamma = 0.1$ in the multiplicative weights update, for both the Hybrid and Composite Methods. The $\gamma = 0.1$ setting yields lower loss.}
    \label{fig:change_gamma}
\end{figure}

The improved performance with $\gamma = 0.1$ compared to $\gamma = 0.5$ is related to an important property of our robust optimization algorithm in practice--namely that $\textbf{w}$ stabilizes for sufficiently large $T$. Over the course of iterations of the algorithm, $\textbf{w}$ moves from the initial discrete uniform distribution to some optimal \textit{stable distribution}, where the stable distribution is consistent across independent runs. The $\gamma = 0.1$ setting yields to better Individual Bottleneck Loss than the $\gamma = 0.5$ setting for finite $T$ because it converges more rapidly to the stable distribution.

\begin{figure}[H]
 \begin{center}
    \includegraphics[scale = .4,angle=0,origin=c]{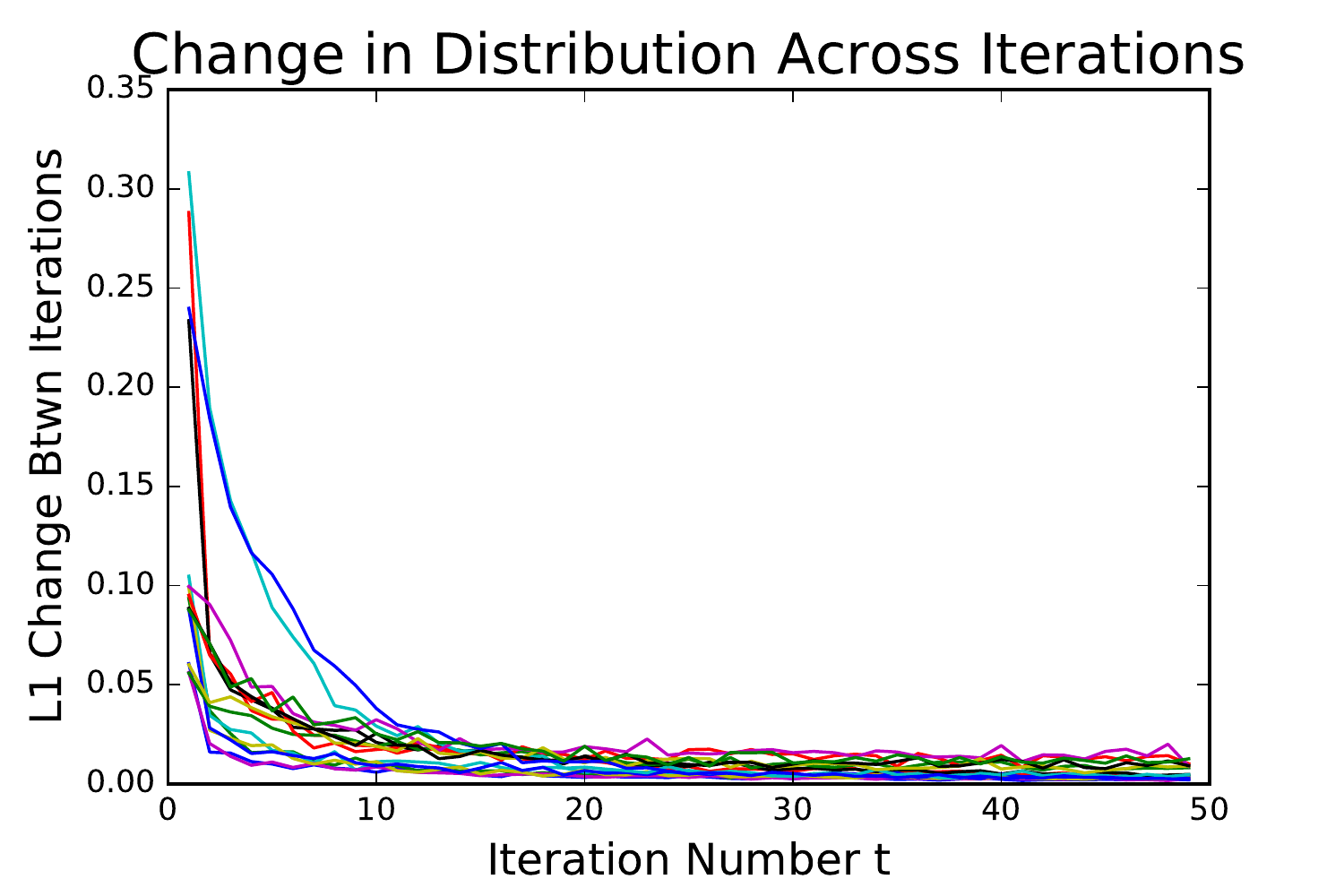}
    \includegraphics[scale = .4,angle=0,origin=c]{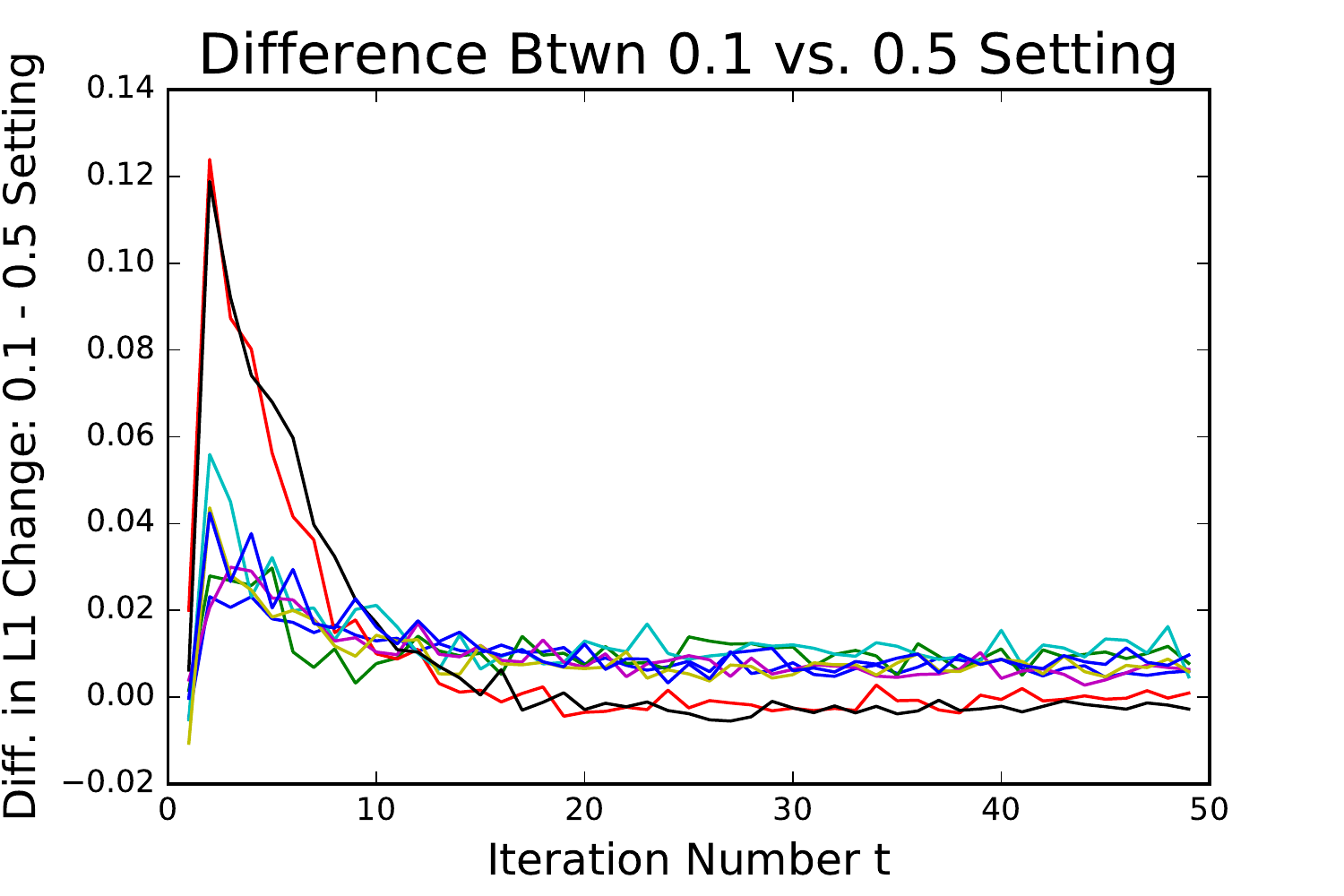}
    \end{center}
    \caption{\textbf{Left:} The amount that the distribution over corruption types $\textbf{w}$ changes between iteration $t$ \& $t+1$ decays rapidly as $t$ increases, and the distribution stabilizes. Plot shows $16$ time series, corresponding to results for each combination of (\{Hybrid, Composite\},\{$\gamma=0.5, \gamma=0.1$\},\{Background, Shrink, Pixel, Mixed\}), using the mean over $10$ runs. \textbf{Right:} The difference between $\gamma = 0.1$ \& $\gamma = 0.5$ in the amount that $\textbf{w}$ changes between iterations. Shows the difference between pairs of time series from the previous figure (thus there are $\frac{16}{2}=8$ time series shown). Values are positive for small $t$ and near $0$ for larger $t$, showing that the $\gamma = 0.1$ setting yields faster changes in $\textbf{w}$ initially, thereby allowing $\textbf{w}$ to more quickly approach the stable distribution.}
    \label{fig:convergence}
\end{figure}

\section{Experiments on Robust Influence Maximization}

\subsection{Influence Results}

\begin{table}[H]
\begin{center}
  \begin{tabular}{ | l | c | c | c | c |}
  \hline
     & \textit{Wikipedia A} & \textit{Wikipedia B} & \textit{Complete A} & \textit{Complete B} \\ \hline
    \textit{Individual Baseline} & \makecell{\textbf{56.56} \\ \textit{(53.55,59.57)}} & \makecell{\textbf{35.84} \\ \textit{(31.93,39.75)}} & \makecell{\textbf{19.77} \\ \textit{(16.57,22.96)}} & \makecell{\textbf{11.27} \\ \textit{(10.77,11.77)}} \\ \hline
    \textit{Uniform Baseline} & \makecell{\textbf{82.30} \\ \textit{(78.19,86.41)}} & \makecell{\textbf{46.60} \\ \textit{(40.53,52.67)}} & \makecell{\textbf{3.10} \\ \textit{(2.24,3.96)}} & \makecell{\textbf{5.20} \\ \textit{(4.07,6.33)}} \\ \hline
    \textit{Perturbed Dist. Baseline} & \makecell{\textbf{83.35} \\ \textit{(79.87,86.82)}} & \makecell{\textbf{48.92} \\ \textit{(43.80,54.03)}} & \makecell{\textbf{21.99} \\ \textit{(17.38,26.61)}} & \makecell{\textbf{10.14} \\ \textit{(9.37,10.91)}} \\ \hline
    \textit{Robust Optimization} & \makecell{\textbf{94.33} \\ \textit{(90.61,98.05)}} & \makecell{\textbf{66.42} \\ \textit{(64.17,68.66)}} & \makecell{\textbf{36.34} \\ \textit{(33.46,39.21)}} & \makecell{\textbf{17.91} \\ \textit{(17.22,18.60)}} \\ \hline
  \end{tabular}
  \end{center}
  \caption{Mean worst-case influence $\min_{i \in [m]} E_{S \sim \Po}[f_i(S)]$ for the solution $\Po$ returned by each method, over $10$ independent runs using $T=200$, and $95\%$ confidence intervals for those means.}
\label{tbl:influence}
\end{table}

Robust Optimization outperforms the baselines, and the differences are statistically significant.\footnote{Claim of statistical significance is based on means of differences between methods, which controls for differences in the $G_i$, rather than differences between means, which are shown in Table \ref{tbl:influence}.}


\subsection{Performance of Single Solutions}

For the Complete Graph $A$ case, it is computationally feasible to obtain the absolute best seed set (via brute force over $\binom{100}{2}$ total possible seed sets), so we can consider the ratio of the best individual seed set generated at some iteration $t$ by robust optimization to the absolute best seed set--that is, $\frac{\max_{S \in \Po} \min_{i \in [m]} f_i(S)}{\max_{S} \min_{i \in [m]} f_i(S)}$. The mean of this ratio over $10$ runs was $0.733$.

For the other three cases, it is not computationally feasible to obtain the absolute best seed set, but we can instead compare the best individual seed set generated by the robust optimization procedure to the Bottleneck Influence value from considering all of $\Po = \{S_1,...,S_T\}$--specifically, the ratio $\frac{\max_{S \in \Po} \min_{i \in [m]} f_i(S)}{\min_{i \in [m]} E_{S \sim \Po} f_i(S)}$. Based on the mean of $10$ runs, this ratio is $0.995$ for Wikipedia $A$, $0.855$ for Wikipedia $B$, and $0.509$ for Complete $B$. The individual seed sets generated by the robust optimization procedure are thus especially good for the Wikipedia Graph; those Wikipedia Graph results are more representative of real graphs, since the Complete Graph has an artificially small number of nodes ($|V|=100$).
\end{appendix}
\end{document}